\documentclass{article}

\usepackage{microtype}
\usepackage{graphicx}
\usepackage{subcaption}
\usepackage{booktabs} 

\usepackage{hyperref}

\usepackage{makecell}
\usepackage[ruled,linesnumbered]{algorithm2e}

\usepackage[preprint]{icml2026}

\usepackage{amsmath}
\usepackage{amssymb}
\usepackage{mathtools}
\usepackage{amsthm}

\usepackage[capitalize,noabbrev]{cleveref}

\theoremstyle{plain}
\newtheorem{theorem}{Theorem}[section]

\theoremstyle{definition}
\newtheorem{definition}[theorem]{Definition}

\theoremstyle{remark}

\usepackage[textsize=tiny]{todonotes}

\usepackage{amsmath,amsfonts,bm}

\def\eqref#1{equation~\ref{#1}}

\def\1{\bm{1}}

\DeclareMathAlphabet{\mathsfit}{\encodingdefault}{\sfdefault}{m}{sl}
\SetMathAlphabet{\mathsfit}{bold}{\encodingdefault}{\sfdefault}{bx}{n}

\usepackage{hyperref}
\usepackage{url}

\usepackage{graphicx}
\usepackage{enumitem}

\icmltitlerunning{Learning Provably Correct Distributed Protocols Without Human Knowledge}

\begin{document}

\twocolumn[
  \icmltitle{Learning Provably Correct Distributed Protocols Without Human Knowledge}



  \icmlsetsymbol{equal}{*}

  \begin{icmlauthorlist}
    \icmlauthor{Yujie Hui}{osu}
    \icmlauthor{Xiaoyi Lu}{ufl}
    \icmlauthor{Andrew Perrault}{osu}
    \icmlauthor{Yang Wang}{osu}
  \end{icmlauthorlist}

  \icmlaffiliation{osu}{The Ohio State University}
  \icmlaffiliation{ufl}{University of Florida}

 \icmlcorrespondingauthor{Yang Wang}{wang.7564@osu.edu}

  \icmlkeywords{Machine Learning, ICML}

  \vskip 0.3in
]



\printAffiliationsAndNotice{}  

\begin{abstract}

Provably correct distributed protocols, which are a critical component of modern distributed systems, are highly challenging to design and have often required decades of human effort. These protocols allow multiple agents to coordinate to come to a common agreement in an environment with uncertainty and failures.
We formulate protocol design as a search problem over strategies in a game with imperfect information, and the desired correctness conditions are specified in Satisfiability Modulo Theories (SMT). However, standard methods for solving multi-agent games fail to learn correct protocols in this setting, even when the number of agents is small.
We propose a learning framework, GGMS, which integrates a specialized variant of Monte Carlo Tree Search with a transformer-based action encoder, a global depth-first search to break out of local minima, and repeated feedback from a model checker. Protocols output by GGMS are verified correct via exhaustive model checking for all executions within the bounded setting. We further prove that, under mild assumptions, the search process is complete: if a correct protocol exists, GGMS will eventually find it. In experiments, we show that GGMS can learn correct protocols for larger settings than existing methods.

\end{abstract}

\section{Introduction}
\label{sec:intro}

Consider a coordination game where multiple agents must reach agreement on a shared decision, but each agent observes only its own local state and the messages it receives—some of which may be lost. An adversary controls which messages fail to arrive. The agents win if they all reach the same valid decision; they lose if any agent decides differently or violates a safety constraint. Crucially, this is not a game where high expected reward suffices—we need \emph{guaranteed} correctness under worst-case adversarial play. A protocol that works 99.9\% of the time is useless; a single counterexample renders it unusable.

This formulation captures the essence of \emph{distributed protocol design}, a fundamental problem in building reliable systems. When you use a database, make a payment, or store a file in the cloud, distributed protocols ensure that multiple machines either all agree on what happened, or the operation safely aborts—even when machines crash or messages vanish. Designing these protocols has historically required decades of human ingenuity: 
The above formulation represents the consensus problem, a classic topic in distributed protocols, and
consensus alone has motivated over 40 years of research spanning Paxos~\citep{Lamport1998Parliament,lamport2001paxos,Moraru2013EPaxos,Ongaro2014Raft} and Byzantine fault tolerance~\citep{Lamport1982Byzantine,Castro1999PBFT,Kotla2007Zyzzyva,Giridharan2024Autobahn}.

\vspace{-.05in}
\paragraph{A concrete example.} In the \emph{consensus} problem, three processes P1, P2, P3 must agree on a single value. Suppose P1 and P2 start proposing ``0'' while P3 proposes ``1''. In round one, each process broadcasts its proposal—but P3 crashes mid-broadcast, so P1 receives P3's message while P2 does not. Now P1 sees \{0, 0, 1\} while P2 sees \{0, 0, ?\}. Despite this asymmetry, a correct protocol must lead both to the same final decision. The FloodSet algorithm~\citep{LynchBook} achieves this in $f+1$ rounds (where $f$ is the maximum failures): each process repeatedly broadcasts everything it has received, and in the final round decides on the minimum value. The key insight is that at least one round must be failure-free, allowing all processes to synchronize (see \S\ref{sec:back-distributed}).

\vspace{-.05in}
\paragraph{Can we automate the discovery of such protocols?} The game-theoretic structure—partial observability, adversarial uncertainty, hard safety constraints—suggests this problem might be amenable to the search-based learning that succeeded in games like Go and poker. We formulate protocol design as search over strategies in an imperfect-information game, where correctness properties are formally specified and exhaustively verified.

We pursue this through \emph{zero-knowledge synthesis}: given only a specification of what constitutes a correct outcome, can a learning system discover protocols without human-designed examples? This methodology serves two purposes. First, it reveals what structure is \emph{necessary} for correctness versus merely \emph{conventional}—when our system independently discovers FloodSet-like protocols, this confirms the structure is dictated by the problem constraints rather than historical accident. Second, it provides a rigorous baseline: success validates that the search space is tractable, while the framework can naturally incorporate human knowledge (warm-starting, architectural constraints) when desired.

However, standard game-playing approaches fail here. In AlphaGo-style MCTS~\citep{silver2017mastering}, self-play learns policies that win \emph{in expectation}. Distributed protocols require something stronger: a policy that \emph{never} loses against \emph{any} failure pattern. Additionally, multiple correct protocols often exist, and naive learning can mix transitions from different protocols—a \emph{superposition problem}—causing failures even when each protocol would individually be correct. These challenges defeat prior methods even for 3–4 agents.

We develop \textbf{Guided Global Monte Carlo Tree Search (GGMS)}, integrating three key ideas. (1) \emph{Model checking as a hard oracle}: any candidate protocol is exhaustively verified against all possible executions (for the given process count). Violations produce counterexamples that feed back into training. This exhaustive verification—not the learning process—is what makes output protocols \emph{provably correct}. (2) \emph{Global depth-first search}: when learning produces ambiguous transitions (multiple outputs with similar probability), often due to the superposition problem mentioned above, GGMS freezes one choice and continues. If no correct protocol exists under current freezes, it backtracks systematically. This guarantees eventual convergence if a correct protocol exists. (3) \emph{Guided sampling}: a phased curriculum starts with less ambiguous scenarios (failures only in later rounds, unambiguous initial states), allowing the effects of frozen transitions to propagate before relaxing to harder cases.

\noindent We make the following contributions. (1) We formulate distributed protocol design as search over state machines in an imperfect-information game with formal correctness specifications and exhaustive verification. (2) We propose GGMS, combining MCTS with a transformer encoder, global DFS, and iterative model-checking feedback. We prove that under mild assumptions, the search is complete: GGMS will not miss correct solutions (\S\ref{sec:dfs}). (3) We demonstrate that GGMS learns correct protocols where standard MCTS fails, scaling to 4 processes with 3 failures. For consensus, it independently recovers FloodSet-like protocols. For synchronous atomic commit—a variant without known solutions—it discovers a novel correct protocol (\S\ref{sec:eval}).

\vspace{-.05in}
\paragraph{Relationship to LLMs.} We tested GPT and Gemini (Appendix~\ref{sec:GPT}): both retrieved FloodSet for consensus but often failed on synchronous atomic commit on the first attempt, since it lacks existing solutions. LLMs struggle with systematic exploration for novel synthesis, making GGMS complementary.

Our evaluation shows GGMS achieves substantially higher success rates than MCTS baselines across all tested configurations. We discuss limitations including synchronous network assumptions and bounded process counts, and outline directions for relaxing these assumptions and integrating with LLM-based approaches.

\section{Background}
\label{sec:background}

\subsection{Distributed Protocols as State Machines}
\label{sec:back-distributed}

The distributed systems community models protocols using the \emph{state machine approach}: each process is a state machine that takes its current state and incoming messages as input, and outputs a new state and messages to send. This enables both precise specification and formal verification.

Protocols vary along several dimensions. \emph{Failure models} specify what can go wrong: in \emph{crash failures}, a failed process simply stops responding; in \emph{Byzantine failures}, a failed process may behave arbitrarily. \emph{Timing models} specify synchrony assumptions: in \emph{synchronous} networks, message delays and clock drift are bounded; in \emph{asynchronous} networks, no such bounds exist. This paper focuses on crash failures in synchronous networks---the simplest non-trivial setting---and discusses extensions in \S\ref{sec:future}.

\vspace{-.05in}
\paragraph{The FloodSet algorithm.}
We formalize the consensus example from \S\ref{sec:intro}. Recall the requirements: (1) every correct process eventually decides, (2) all decisions are identical, and (3) the decision must be some process's initial proposal.

FloodSet~\citep{LynchBook} works in $f+1$ rounds, where $f$ is the maximum number of failures. Each process $p$ maintains a set $W$, initialized with $p$'s own proposal. Each round, every process broadcasts $W$ and updates $W := W \cup \bigcup_j \textit{Received}_j$. After $f+1$ rounds, each process decides $\min(W)$.

\emph{Why does this work?} Among $f+1$ rounds, at least one round has no failures (since at most $f$ processes can fail total). In that round, all surviving processes receive identical messages and reach the same $W$. This $W$ will not change in subsequent rounds, so $\min(W)$ remains consistent.

As a state machine: the state is $W$, the transition function unions received sets into $W$, and the final output applies $\min(\cdot)$.

\subsection{Model Checking}
\label{sec:back-model-checking}

Given a state machine and formal correctness properties, \emph{model checking} verifies that no execution violates the properties. For a fixed number of processes $N$, this can be done by exhaustive enumeration of all possible initial states and failure patterns. More efficient approaches use SMT solvers~\citep{de2008z3} or exploit symmetry~\citep{Leesatapornwongsa2014SAMC}.

For example, the consensus agreement property is formalized as:
\begin{multline}
\textbf{P1: } \neg \exists\, n, m \in N,\; f_n \in F_n,\; f_m \in F_m :\\ \big(f_n = \texttt{decision:0} \land f_m = \texttt{decision:1}\big)
\end{multline}
where $F_n$ denotes the final decisions of process $n$. Full property specifications appear in Appendix~\ref{sec:formal-properties}.

Model checking provides correctness guarantees for a \emph{concrete} $N$. Proving correctness for \emph{arbitrary} $N$ requires inductive formal verification~\citep{hawblitzel2015ironfleet,ma2019i4,yao2021distai,zhang2025basilisk}---an important direction we leave to future work.

\section{Problem Formulation}
\label{sec:modeling}

We now formalize the synthesis problem. Our goal is to \emph{learn} a state machine that satisfies correctness properties under all possible failure scenarios, given only a specification of what correct behavior means---not how to achieve it.

\subsection{State Machine Model}

Each process runs an identical deterministic state machine. The input at each round consists of: the current state, messages received from all processes (including itself), the round number, and optionally the process ID. The output is a new state, which is broadcast in the next round.

States fall into four categories:
\begin{itemize}[leftmargin=*, itemsep=2pt]
    \item \textbf{Initial states} $I$: possible starting states (e.g., \texttt{init:0}, \texttt{init:1} for binary consensus)
    \item \textbf{Decision states} $D$: final outputs visible externally (e.g., \texttt{decision:0}, \texttt{decision:1})
    \item \textbf{Lost state} $L$: a special symbol indicating a message was not received
    \item \textbf{Internal states}: intermediate states for coordination (e.g., \texttt{internal:a}, \texttt{internal:b})
\end{itemize}

The human specifies only $I$, $D$, and correctness properties $C$. The learning system must discover how many internal states are needed and what transitions to use---analogous to AlphaGo Zero learning game strategies given only the rules.

\subsection{Execution Model}

\begin{figure}[t]
    \centering
        \includegraphics[width=0.48\textwidth]{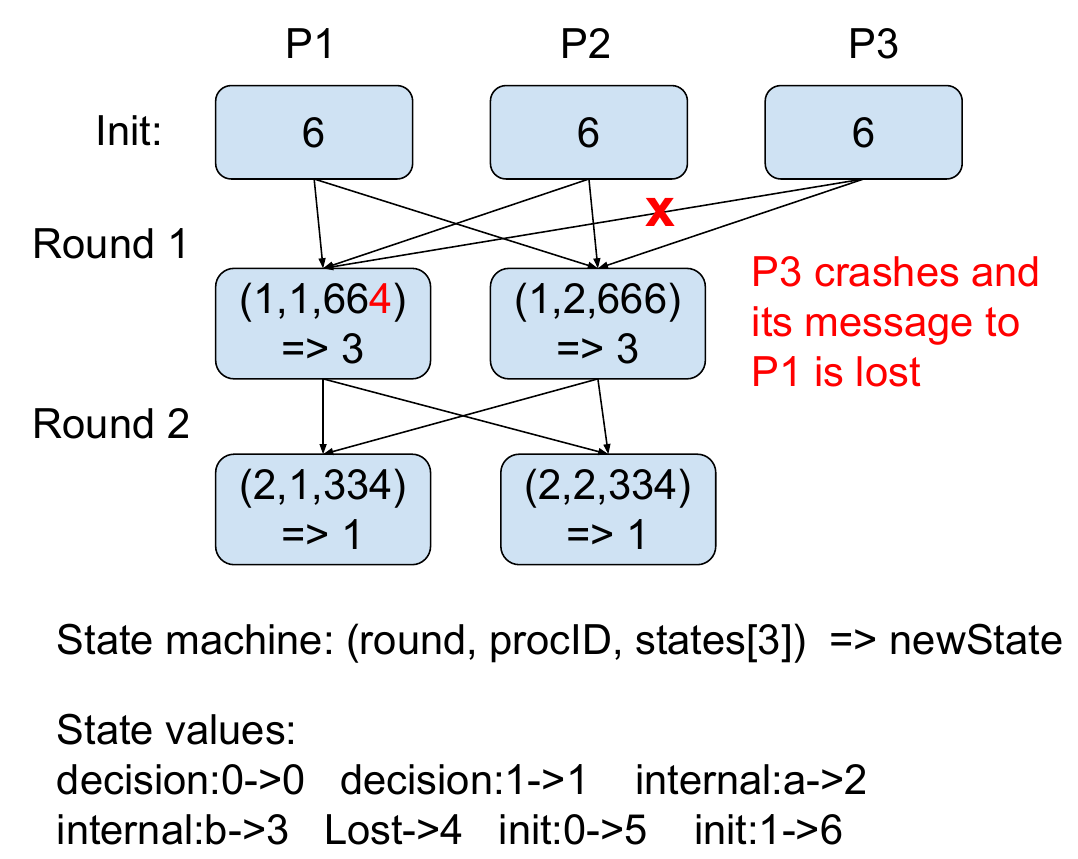}
        \caption{Simulating the FloodSet protocol. P3 crashes in Round 1 and causes P1 and P2 to have divergent inputs, but P1 and P2 still converge eventually following the protocol.}
        \label{fig:simulation_example}
        \vspace{-.1in}
\end{figure}

Execution proceeds in synchronous rounds. Before round 1, each process receives an initial state. In each round: (1) every process broadcasts its current state to all others, (2) some messages may be lost due to crashes, and (3) each process applies its transition function to compute a new state. In the final round, processes transition to decision states.

\paragraph{Modeling crashes.} We model crash failures through message loss: if process $p$ crashes during round $i$, some recipients may receive $p$'s round-$i$ message while others receive $L$ (lost). After crashing, all of $p$'s subsequent messages are lost. This captures the partial-broadcast semantics that make consensus non-trivial.

 Figure~\ref{fig:simulation_example} illustrates an execution of the FloodSet algorithm, where P3 crashes mid-broadcast in round 1, causing P1 and P2 to observe different inputs---yet both still converge to the same decision.

\subsection{Formal Problem Definition}

\begin{definition}[Protocol Specification]
A target protocol is a tuple $\mathit{DP} = (I, D, C)$: initial states $I$, decision states $D$, and correctness properties $C$.
\end{definition}

\begin{definition}[Setting]
A setting is a tuple $S = (n, r, f, k)$: number of processes $n$, rounds $r$, maximum failures $f$, and internal states $k$.
\end{definition}

\begin{definition}[Scenario]
A scenario $\mathit{Sc} = (\mathit{Init}, \mathit{Loss})$ specifies initial states $\mathit{Init}[1..n]$ and message losses $\mathit{Loss} \subseteq \{(i, j, t) : \text{message from } i \text{ to } j \text{ lost in round } t\}$.
\end{definition}

\begin{definition}[State Machine]
A state machine $\mathit{SM}$ is a transition function: $[\mathit{round}, \mathit{procID}, \mathit{inputStates}] \mapsto \mathit{newState}$.
\end{definition}

\noindent\textbf{Synthesis goal:} Given $\mathit{DP}$ and $S$, find $\mathit{SM}$ such that for \emph{all} scenarios $\mathit{Sc}$ consistent with $S$, executing $\mathit{SM}$ satisfies $C$.

A setting is \textbf{feasible} if such an $\mathit{SM}$ exists. Since feasibility is unknown a priori, our approach starts with small settings and increases $r$ or $k$ if synthesis fails.

\subsection{Assumptions and Limitations}

Our model makes simplifying assumptions:
\begin{itemize}[leftmargin=*, itemsep=2pt]
    \item \emph{Identical state machines}: all processes run the same code, precluding Byzantine behavior
    \item \emph{Broadcast only}: processes send identical messages to all others, precluding point-to-point protocols
    \item \emph{Synchronous timing}: messages sent in round $i$ arrive by round $i$'s end
    \item \emph{Final-round decisions}: processes decide only in the last round
\end{itemize}

These restrictions define a tractable starting point. We discuss relaxations in \S\ref{sec:future}.

\section{Learning a Distributed Protocol}
\label{sec:training}

\subsection{Monte-Carlo Tree Search}
\label{subsec:mcts}

\begin{figure}[t]
    \centering
        \includegraphics[width=0.48\textwidth]{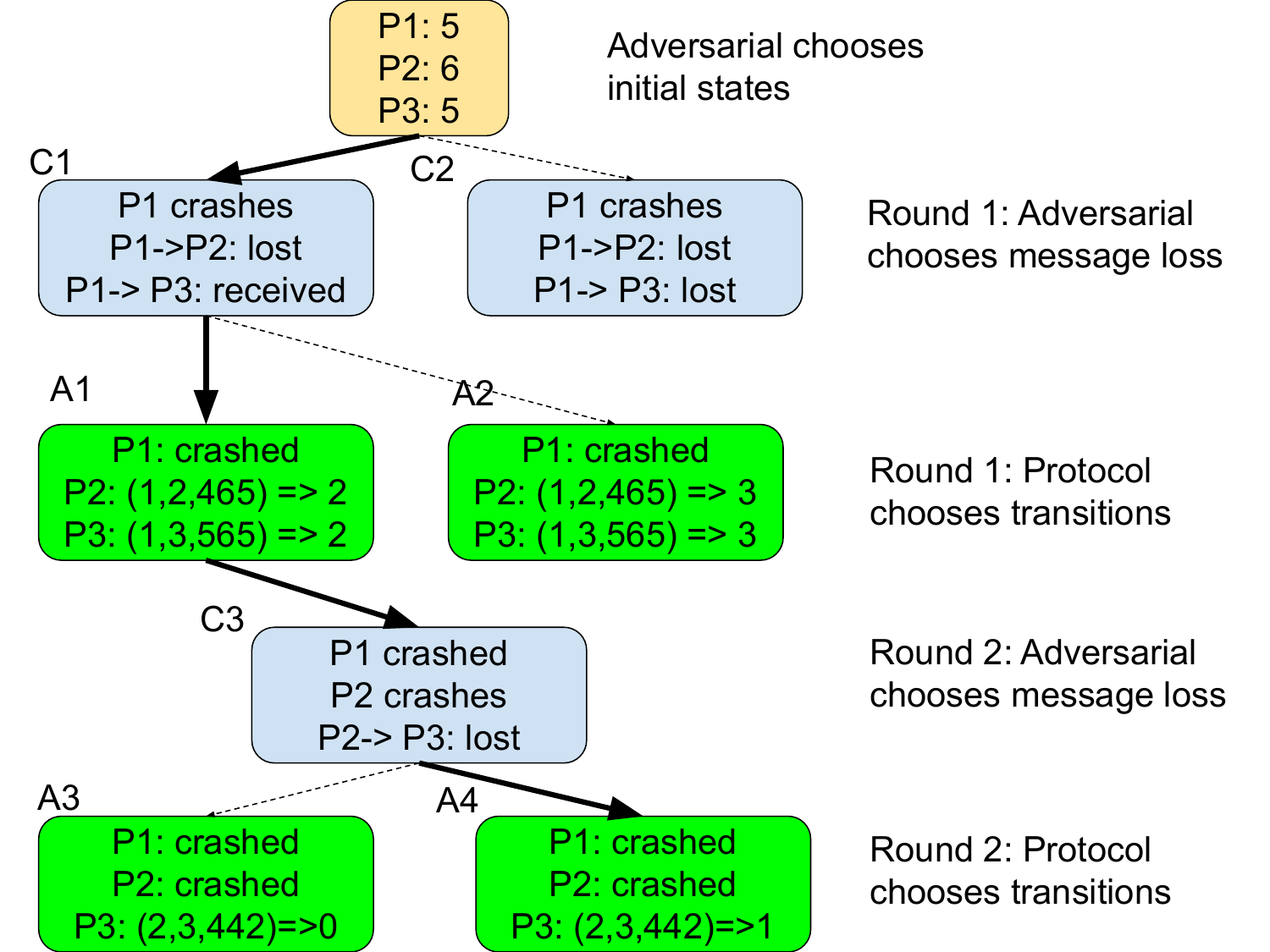}
        \caption{Applying MCTS. Adversarial chooses initial states and lost messages that are likely to violate correctness. Protocol chooses transitions that are likely to achieve correctness.}
        \label{fig:mcts}
         \vspace{-.1in}
\end{figure}

Inspired by the similarity between distributed protocols and board games, we first apply the Monte Carlo Tree Search (MCTS) simulation approach.

We model the whole process as two players in a game. The protocol player tries to find the right transitions in its state machine so that all processes in the distributed protocol
can always achieve the correctness properties. The adversarial player tries to
find the scenarios that can defeat the protocol
player's state machine, i.e. making it violate the correctness properties. While it
may be possible to train a model for the adversarial player as well, our current implementation
relies on random exploration for the adversarial player and relies on the model checker
to make the final checking.

The whole simulation process works as shown in Figure~\ref{fig:mcts}. 
In one simulation, the adversarial player first selects the initial state for each process and messages to lose
for the first round; our simulation follows the procedure in \S\ref{sec:modeling} to let each process
broadcast its state, apply the message loss selected by the adversarial player,
and compute the input for each process; then the protocol player selects the new state for each process
based on its input. The simulation repeats this process until the maximum number of rounds is reached.
By executing the simulation multiple times, allowing both players to explore different transitions, we can 
merge their results to build a search tree (Figure~\ref{fig:mcts}).

Our MCTS implementation is similar to that of AlphaGo-Zero, with the following differences. First,
we do not use the value network in our MCTS. In AlphaGo-Zero, the value network is used to predict the expected reward, which guides the MCTS simulations. However, the number of rounds of the distributed protocol is much smaller than that in Go. This means we can simulate until the end of a protocol to get the real reward without relying on the value network.

Second, in AlphaGo-Zero, the MCTS algorithm is also used for inference, i.e., when actually playing the game or running the protocol. However, we can only use the trained policy network for inference because the internal information, such as visited count, rewards, and probabilities, is invisible between different processes during actual running. In other words, during inference, among multiple transitions from the same input, a process will always apply the
one with the highest probability.

The details of our MCTS implementation are in \S\ref{sec:pseudo-mcts}.

\subsection{Ensuring Convergence with Global DFS}
\label{sec:dfs}
MCTS alone can occasionally converge to a correct state machine, but often fails
to do so. Our investigation shows that the primary reason is the superposition problem, that is, there often exist multiple versions of correct state machines, and MCTS may end up in a situation where it learns some transitions from one version
and some other transitions from another version, but when these transitions are combined, they do not generate
a correct state machine.

There are multiple reasons for the existence of multiple correct versions. First, 
the state machine has the flexibility to give a specific meaning to an arbitrary internal state, creating multiple
equivalent protocols. For example, for consensus, one version could use internal state
\texttt{internal:a} to represent the intention for \texttt{decision:0}, and another version could use \texttt{internal:b}
for the same purpose.

Second, some protocols inherently have some flexibility to allow different decisions. 
For example, for consensus, if some processes have \texttt{init:0} and some have \texttt{init:1},
they could either all decide \texttt{decision:0} or all decide \texttt{decision:1}.
Suppose that in such a scenario, a process has input A and
another process has input B at the same round. In a correct protocol, both A and B should
transit to the same state, either $\texttt{intent-0}$ or $\texttt{intent-1}$.
When simulating this scenario, MCTS can find this constraint and maybe give
$A \rightarrow \texttt{intent-0}$ and $B \rightarrow \texttt{intent-0}$ a higher
probability than $A \rightarrow \texttt{intent-1}$ and $B \rightarrow \texttt{intent-1}$.
However, A or B can also appear alone in other scenarios. When combing probabilities across scenarios, MCTS 
may end up with $A \rightarrow \texttt{intent-0}$ having a higher probability than $A \rightarrow \texttt{intent-1}$ but $B \rightarrow \texttt{intent-1}$ having a higher probability than
$B \rightarrow \texttt{intent-0}$, which violates the constraint that A and B should transit to the same state. We show detailed examples of these two reasons 
and how they affect MCTS in \S\ref{sec:examples-convergence}.

Our approach, Guided Global Monte Carlo Tree Search (GGMS), addresses this issue by enhancing MCTS with
Depth-First Search (DFS).
The key idea is that, if multiple transitions from the same input have similar probability, GGMS should try to freeze it to one transition
(i.e., don't allow random exploration for this input during MCTS) and then keep training to see whether it can get a correct state machine. 
In the above example, by freezing the transition $A \rightarrow \texttt{intent-0}$, we hope that keeping training will motivate $B \rightarrow \texttt{intent-0}$.
Since there might be multiple such ambiguity points, GGMS may repeat freezing multiple times. And since freezing may be wrong (e.g.,
freezing $A \rightarrow \texttt{intent-0}$ and $B \rightarrow \texttt{intent-1}$ for whatever reason), GGMS also needs to unfreeze certain transitions when it hits a dead end.
This procedure leads to a DFS-like search, in which GGMS keeps freezing certain transitions until either it succeeds in finding a correct
state machine or it hits a dead end. In the latter case, it unfreezes prior frozen transitions in an DFS manner.

\begin{theorem}[Search Completeness]
With a feasible setting, assuming GGMS's unfreezing condition is accurate, GGMS can eventually find a correct state machine.
\end{theorem}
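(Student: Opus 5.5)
The plan is to argue by a finiteness-plus-invariant argument over the DFS tree of freeze decisions. In a fixed setting $S=(n,r,f,k)$ the set of possible inputs $[\mathit{round},\mathit{procID},\mathit{inputStates}]$ is finite, and so is the set of output states. Hence the set of all state machines is finite. Any \emph{freeze set}, a partial assignment fixing the transition for some inputs, corresponds to the finite subfamily of state machines consistent with it. I would view GGMS's global search as a DFS over the tree whose nodes are freeze sets. The root is the empty freeze set. A child extends its parent by freezing one additional, previously unfrozen input to one of its finitely many candidate outputs. This tree has depth at most the number of inputs and finite branching, so it is finite.

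The key invariant is this: call a freeze set \emph{viable} if at least one correct state machine is consistent with it. Feasibility of $S$ says exactly that the root is viable. I will interpret the hypothesis ``the unfreezing condition is accurate'' as follows. GGMS declares a dead end at a freeze set (and backtracks) only if that freeze set is not viable. Under this reading, GGMS never abandons a viable node. Moreover, whenever a node is viable, some child of it is viable: take any correct $\mathit{SM}$ consistent with the node, and freeze the next input to $\mathit{SM}$'s value there. The DFS exhausts the children of a node before backtracking past it. Since backtracking only happens from non-viable nodes, it can never retreat past a viable node without first exploring all its children, one of which is viable.

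Combining these gives the termination argument. Each MCTS training phase at a node either (a) yields a candidate that the model checker verifies correct, in which case we are done, (b) triggers a new freeze and descends, or (c) triggers unfreezing. The number of freeze sets is finite and DFS visits each at most once, so the process cannot run forever with only (b) and (c). By the invariant, along the current DFS path there is always a viable node. Eventually the search reaches a viable leaf, where every input is frozen. There the unique consistent state machine is itself correct, MCTS has no freedom, the policy outputs exactly it, and the model checker accepts it.

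The main obstacle is pinning down what ``eventually'' requires of each MCTS phase: it must end in finite time with either a verified machine, a freeze, or an unfreeze, rather than training forever at a node. I would handle this by assuming each training phase has a bounded budget after which GGMS either freezes the most ambiguous unfrozen input or invokes the unfreezing test. If this is not explicit in the algorithm, I would add it as part of the ``mild assumptions''. A secondary point is that freezing decisions chosen by MCTS probabilities only affect the \emph{order} in which children are explored, not completeness, so the argument does not depend on the quality of learning at all.
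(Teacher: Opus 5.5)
Your proposal is correct and takes essentially the same approach as the paper, whose entire justification is that a fixed setting admits only finitely many state machines, so the DFS over freezes eventually explores them all and reaches a correct one. Your version spells out what that sketch leaves implicit: the viability invariant shows where the accuracy of the unfreezing condition is used (it never prunes a freeze set that is still consistent with some correct machine), and you also state the progress assumptions the sketch silently relies on, namely that each MCTS phase ends in a verified success, a freeze, or an unfreeze, and that at a fully frozen leaf the network realizes the frozen transitions.
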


This is because, for a specific setting, the number of possible state machines is finite.
Therefore, DFS can eventually explore all
possible state machines, ensuring that we can find a correct one.

However, naive DFS (i.e., randomly freezing a transition) may take
too long. Consider a protocol which involves two initial states,
two internal states, two decision states, three processes, and three rounds,
and assume that process ID does not affect
transitions, the total number of possible inputs to the state machine is
$3 (round) \times 2 (ownState) \times 3^2 (otherState) = 54$. And since
each of these inputs can transit to two values, the total number of state
machines is $2^{54}$. Using DFS to explore each state machine is too expensive.
By combining DFS and MCTS, GGMS relies on DFS to break ambiguity, and relies on MCTS to
provide hints about what transitions to freeze and ``propagate'' the effect of freezing (e.g., if we freeze $A \rightarrow \texttt{intent-0}$,
then MCTS can find that $B \rightarrow \texttt{intent-0}$).

We present the details of our DFS algorithm in \S\ref{sec:pseudo-dfs}, which includes the conditions
for freezing and unfreezing and how to determine which transitions to freeze or unfreeze. Our current unfreezing condition,
which is based on exhaustive search, is accurate but not scalable, and we discuss possible ways to replace exhaustive search
with a more scalable Z3-based solver.

Note that in practice, there is always a time limit for training, so despite
the eventual convergence guarantee, GGMS may not succeed within the time limit. However,
this property means that we can always devote more time and/or resources to
increase the chance of success.

\subsection{Accelerating Convergence with Guided MCTS}
\label{sec:guided}

While DFS helps to address the ambiguity issue, its speed is sometimes not satisfactory
due to the following reasons. First, as discussed before, since GGMS can give a meaning to an arbitrary
internal state, it needs multiple rounds
of freezing to break the ambiguity among them. With more rounds in the protocol and more states, 
this process requires more rounds of freezing. To address this problem, at the beginning of training,
GGMS freezes all transitions in one particular scenario (Sc), so that the simulation can reach
a correct decision. Such a group freezing strategy helps GGMS to break the ambiguity among internal states more
quickly. Furthermore, we can prove that, under certain conditions,
it will not hurt the capability of GGMS to find a correct state machine.

\begin{theorem}[Group Freezing Preserves Completeness]
    In a feasible setting, assuming that 1) the scenario of this particular simulation leads to a definite
    decision (i.e, no ambiguity),
    and 2) for each pair of (round, procID), this approach only freezes one transition
    (round, procID, inputA) $\rightarrow$ B, there exists a correct state machine compatible with all frozen transitions ($Transitions_{fix}$).
\end{theorem}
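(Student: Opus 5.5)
The plan is to start from an arbitrary correct state machine and transform it so that it agrees with every frozen transition while remaining correct. Feasibility gives some correct $\mathit{SM}^*$. Fix the particular scenario $\mathit{Sc}_0=(\mathit{Init}_0,\mathit{Loss}_0)$ used for group freezing. Running $\mathit{SM}^*$ on $\mathit{Sc}_0$ produces, for each (round, procID), one input $\mathit{in}^*_{t,p}$ and one output state $s^*_{t,p}$. The frozen set $\mathit{Transitions}_{fix}$ consists of transitions (round $t$, procID $p$, $\mathit{in}_{t,p}) \rightarrow B_{t,p}$, which come from a simulation of $\mathit{Sc}_0$ that reaches a correct decision. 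Assumption 2) guarantees there is exactly one frozen input per (round, procID). The goal is a relabeling of internal states, applied round by round, that maps the trace of $\mathit{SM}^*$ on $\mathit{Sc}_0$ onto the frozen trace.

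\textbf{Step 1 (decisions agree).} Assumption 1) says $\mathit{Sc}_0$ has a definite decision: every correct protocol must output the same decision states on it. This holds, for instance, when all processes start with \texttt{init:0}, where validity forces \texttt{decision:0}. Hence the final-round outputs of $\mathit{SM}^*$ on $\mathit{Sc}_0$ coincide with the frozen final-round outputs $B_{r,p}$. Initial states are fixed by $\mathit{Init}_0$ in both runs.

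\textbf{Step 2 (relabel internal states).} Correctness properties in $C$ refer only to initial and decision states, so they are invariant under any per-round bijection $\pi_t$ of internal states. Concretely, define $\mathit{SM}'$ by renaming state $s$ emitted in round $t$ to $\pi_t(s)$. Inputs containing renamed states are translated back by $\pi_t^{-1}$ before $\mathit{SM}^*$ is applied, so every execution of $\mathit{SM}'$ is the image of an execution of $\mathit{SM}^*$ with identical decisions. I would then choose $\pi_t$ inductively on $t$ so that $\pi_t(s^*_{t,p}) = B_{t,p}$ for all $p$. With this choice, the round-$t$ inputs of the two runs on $\mathit{Sc}_0$ coincide by induction, and the frozen transitions are satisfied by $\mathit{SM}'$.

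\textbf{Main obstacle.} The hard part is that $\pi_t$ must be a consistent injective map. If $s^*_{t,p}=s^*_{t,q}$ but $B_{t,p}\neq B_{t,q}$, or the converse, no bijection works. I would first try to handle this by using the freedom in $\mathit{SM}^*$ that procID is an input: since on $\mathit{Sc}_0$ each (round, procID) pair has a single frozen input, one can redefine $\mathit{SM}^*$ on exactly those inputs to emit distinct fresh-looking labels per process. This avoids merging but may change behaviour on other scenarios sharing the same input, so it needs an argument.

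If that fails, the fallback is to interpret Assumption 1) more strongly: in an unambiguous scenario the frozen simulation's states are themselves forced up to relabeling. If equality patterns differ, one would exhibit a second correct machine realizing the frozen pattern. I expect this step to rely on the paper's informal notion of ``no ambiguity'', and I would make that notion precise as ``the equality pattern of states across processes on $\mathit{Sc}_0$ is the same in all correct machines''. With that hypothesis the relabeling construction goes through directly.
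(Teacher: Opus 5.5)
There is a genuine gap. The obstacle you flag is created by your choice of a single relabeling $\pi_t$ per round, and neither of your remedies resolves it.

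The missing idea is to index the relabeling by the \emph{sender}. For each (round $t$, procID $p$), choose a bijection $\pi_{t,p}$ on the non-$L$ states, with $\pi_{r,p}$ the identity. Set $\mathit{SM}'(t,p,x)=\pi_{t,p}\bigl(\mathit{SM}^*(t,p,\tilde{x})\bigr)$, where $\tilde{x}$ decodes the component of $x$ from each sender $q$ by $\pi_{t-1,q}^{-1}$ and leaves $L$ untouched. Input vectors are indexed by sender, so every receiver can undo $p$'s renaming; this includes $p$ itself for its own state. By induction on rounds, each execution of $\mathit{SM}'$ is then the componentwise image of the execution of $\mathit{SM}^*$ under the same scenario, with identical decisions, so $\mathit{SM}'$ is correct.

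On $\mathit{Sc}_0$, assume inductively that the round-$t$ inputs already equal $\mathit{in}_{t,p}$, for $t<r$. Take $\pi_{t,p}$ to be the transposition exchanging $\mathit{SM}^*(t,p,\widetilde{\mathit{in}_{t,p}})$ with $B_{t,p}$, chosen independently for each $p$. The case $s^*_{t,p}=s^*_{t,q}$ but $B_{t,p}\neq B_{t,q}$ is then harmless, because $p$ and $q$ get separate bijections.

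This is exactly the paper's construction: it swaps $B$ and $C$ in the round-$i$ outputs of that procID, and in that procID's component of every round-$(i+1)$ input. It also explains why Assumption 2 is stated per (round, procID): you need one transposition per sender per round, and these could conflict only if two frozen transitions shared a (round, procID). Your Step 1 then closes the final round exactly as the paper does.

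Your first remedy, redefining $\mathit{SM}^*$ only on the frozen inputs, changes what a process emits without changing how receivers interpret it. It can therefore break correctness on other scenarios that reach those inputs. Also, with $k$ fixed, there may be no unused labels to emit. Your fallback of strengthening Assumption 1 to a hypothesis about equality patterns across processes proves a different, weaker theorem rather than the one stated.
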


We provide the formal proof in \S\ref{sec:proof}. Intuitively, if a correct state machine has (round, procID, inputA) $\rightarrow$ C, we can always swap B and C for (round, procID) to get an equivalent protocol.

Second, as discussed before, we expect MCTS
to propagate the effect of freezing. In the above example, when GGMS freezes $A \rightarrow \texttt{intent-0}$,
it expects MCTS to find that it should choose $B \rightarrow \texttt{intent-0}$. In practice, such
propagation does not always succeed, since propagation often relies on a particular scenario
to identify the relationship. In the above example, in order for GGMS to learn that
A and B should transit to the same state, A and B should happen together in the same simulation.
In other scenarios where only B occurs, GGMS may find that it is OK for $B$
to transit to either \texttt{intent-1} or \texttt{intent-0}, due to the ambiguity problem discussed above. 
If simulations including both A and B happen rarely, but simulations
including only B happen often, GGMS may not give $B \rightarrow \texttt{intent-0}$ a high probability.
This problem is particularly troublesome when the random initialization of the state machine gives $B \rightarrow \texttt{intent-1}$
a high probability to begin with. 

To address this, we introduce a guided sampling method. After freezing the initial path, GGMS simulates
only those scenarios where message losses occur in the final round and initial states lead to a
definite decision. 
After the model becomes fully correct in this stage, GGMS relaxes the restriction to allow message losses in the last two rounds. GGMS then repeats simulation and learning until the model achieves full correctness under this relaxed setting. GGMS repeats this until it allows message losses in all rounds. Finally, GGMS relaxes to allow all possible initial states.

Such a design is based on the observation that there is less ambiguity when the protocol is in later rounds
and when the protocol starts with initial states that lead to definite decisions. By first simulating
in these scenarios, GGMS can better avoid the ``noise'' from ambiguity, and thus better propagate the
effects of freezing. Then, after such propagating has settled (i.e., the corresponding transitions reach
a high probability), GGMS can further propagate its effects by relaxing crash and initial states scenarios.
We present the details in \S\ref{sec:pseudo-overview}.

\subsection{Validating the Learned Model}
\label{sec:verify}
Exhaustive verification is what makes GGMS outputs \emph{provably correct}. Our verifier enumerates all $x^n$ initial state configurations ($x$ is the number of initial states and $n$ is the number of processes) and all message loss patterns consistent with at most $f$ crash failures, checking that the learned state machine satisfies all safety properties. A protocol is returned only if it passes this complete check. Verification is not the bottleneck in practice, as MCTS simulation dominates running time; details appear in \S\ref{sec:pseudo-validation}.
In the future, assuming MCTS will be optimized, we will
switch to more efficient verifiers like Z3

GGMS integrates verification into a counterexample-guided loop: after each episode, the verifier either confirms correctness (triggering termination or relaxation per \S\ref{sec:guided}) or returns counterexamples whose sampling rate is increased in subsequent episodes (\S\ref{sec:pseudo-retrain}).

\section{Evaluation}
\label{sec:eval}

\begin{table*}[t]
\centering
\resizebox{0.8\textwidth}{!}{
\begin{tabular}{|c|c|c|c|c|c|c|c|c|c|c|c|c|}
\hline
            & \multicolumn{3}{|c|}{Success rate} & \multicolumn{9}{|c|}{Running time (minutes) } \\

\hline
            & MCTS & MCTS+DFS & GGMS & \multicolumn{3}{|c|}{MCTS} & \multicolumn{3}{|c|}{MCTS+DFS} & \multicolumn{3}{|c|}{GGMS} \\
\hline
            & \multicolumn{3}{|c|}{} & avg & min & max & avg & min & max & avg & min & max \\
\hline
     ac-2-1 & 50\%  & 60\% & 100\% & 133 & 15 &301 & 52 & 25 & 96 & 15 & 8 &29 \\
\hline
     ac-3-2 & 0  & 40\% & 60\% & -- & -- & -- & 1413 & 722 & 1825  & 813 & 175 & 1441 \\
\hline
     ac-4-1 & 10\%  & 20\% & 100\% & 184 & 184 & 184 & 192 & 191 & 193 & 413 & 310 & 630 \\
\hline
     ac-4-2 & 0  & 0 & 70\% & -- & -- & -- & -- & -- & -- & 754 & 563 & 938 \\
\hline
     con-2-1 & 90\%  & 90\% & 100\% & 46 & 3 & 180 &  25 & 3 & 55 & 8 & 7 & 11 \\
\hline
     con-3-2 & 80\%  & 80\% & 100\% & 1106 & 49 & 2387 & 273 & 148 & 693 & 118 & 104 & 151 \\
\hline
     con-4-1 & 80\%  & 70\% & 100\% & 328 & 76 & 1576 & 348 & 146 & 654 & 268 & 253 & 280 \\
\hline
     con-4-2 & 30\%  & 40\% & 90\% & 678 & 470 & 922 & 1783 & 1277 & 2560 & 717 & 610 & 970 \\
\hline
     con-4-3 & 0  & 0 & 100\% & -- & -- & -- & -- & -- & -- & 3358 & 1857 & 5693 \\
\hline
\end{tabular}
}
\caption{Success rate and running time of different methods. We run each setting 10 times.}
\label{tab:summary}
 \vspace{-.2in}
\end{table*}

We apply GGMS to learn synchronous atomic commit and consensus protocols.
Atomic commit may look similar to consensus to some extent: Processes start with 
proposals ``abort'' or ``commit'' and try
to reach the same decision. However, atomic commit is different from consensus in
two ways: First, if a process proposes ``abort'', then the final decision of atomic commit
must be ``abort'' (consensus allows ``commit'' if another process proposes ``commit'').
Second, if a process crashes, the final decision could be ``abort'',
even if all processes propose ``commit'' (consensus requires ``commit''
in this case). Such differences lead to protocols that are different from consensus.

We document the formal
properties of these protocols in \S\ref{sec:formal-properties} and our experiment settings in Sections~\ref{sec:model-parameters} and \ref{sec:experiment-setting}.
Synchronous consensus is a well-studied field, with protocols such as FloodSet and Primary Backup~\citep{Bressoud1996Hypervisor}. Synchronous atomic commit is not well-studied
as far as we know, because well-known atomic commit protocols, such as two-phase commit (2PC)~\citep{gray1978notes},
target asynchronous environment.

For synchronous consensus, human knowledge tells that a setting $S$ with two internal states and $r>f$
($r$ is the number of rounds and $f$ is the number of failures allowed)
is feasible. We verified the same conclusion for synchronous atomic commit.
We report results on such feasible settings. As a sanity check,
we tested some infeasible settings and did not get a correct protocol.
In the following report, we use a triple prot-$n$-$f$ to represent the setting: prot represents the
protocol (``ac'' for atomic commit and ``con'' for consensus). $n$ is the total number of processes. We set $r=f+1$.
We compare GGMS with MCTS and MCTS+DFS to understand the effectiveness of DFS and Guided MCTS.
MCTS includes the techniques described in \S\ref{subsec:mcts}; MCTS+DFS
include \S\ref{subsec:mcts} and \S\ref{sec:dfs}; GGMS includes \S\ref{subsec:mcts}, \S\ref{sec:dfs}, and \S\ref{sec:guided}.

\begin{algorithm}[t]
\caption{Atomic commit found by GGMS (f=1)}
\label{alg:atomic-commit}
$\mathbf{Initialization:}$ \\
\hspace*{1em} $V_i \leftarrow \{v_i\} \quad \text{either init:commit or init:abort}$\\
$\mathbf{Round \ } 1:$ \\
\hspace*{1em} \text{if no message loss and every received $V_j$ is init:commit} \\
\hspace*{2em}    $V_i \leftarrow internal:a$\\
\hspace*{1em} \text{else}\\
\hspace*{2em}    $V_i \leftarrow internal:b$ \\
$\mathbf{Round \ } 2:$ \\
\hspace*{1em} \text{if every received $V_j$ is internal:a} \\
\hspace*{2em}      $V_i \leftarrow decision:commit$ \\
\hspace*{1em} \text{else}\\
\hspace*{2em}    $V_i \leftarrow decision:abort$ 

\end{algorithm}

 \vspace{-.05in}
\paragraph{Found protocols.}
For consensus, GGMS finds protocols that are similar to the FloodSet protocol. For atomic commit, GGMS finds protocols that modify FloodSet to adapt to the additional requirement of atomic commit. Concretely, these protocols treat ``lost message'' as ``abort'' in the first round
and ignore ``lost message'' in later rounds (see Algorithm~\ref{alg:atomic-commit} for an example). 
Though simple, this is
a new protocol as far as we know.

 \vspace{-.05in}
\paragraph{Success rate.}
Table~\ref{tab:summary} shows the success rate of different methods.
As DFS can guarantee eventual success, this success rate is based on a limited time, which is documented
in Sections~\ref{sec:experiment-setting}. 
For one setting, we run a method 10 times and report the number of times it can
succeed in finding a correct protocol. For MCTS and MCTS+DFS, if they had a low success rate at a certain setting,
we did not further try them on larger settings. As shown in this table,
in all settings, GGMS consistently achieves higher success rates than MCTS and MCTS+DFS. 
From the training logs, we find that MCTS is primarily bothered by the superposition problem,
and increasing training time does not help much. MCTS+DFS tries to address the superposition
problem by freezing transitions, but when it freezes wrong ones, it will need unfreezing, which
takes a lot of time.

 \vspace{-.05in}
\paragraph{Running time.}
Table~\ref{tab:summary} shows the running time to converge to a correct model (excluding unsuccessful runs).
GGMS' speed is faster than or similar to that of MCTS and MCTS+DFS in most of the settings.
Note that excluding unsuccessful runs
is favorable for MCTS and MCTS+DFS: In a difficult setting, MCTS and MCTS+DFS may fail
many trials, which do not count, but GGMS may succeed after a long run, which increases its
average running time.

Our further analysis
shows that in GGMS, the simulation time for MCTS dominates the overall time compared to the training and validation time.
For atomic commit, we also observe a significant variation in running time, usually due to incorrect freezing
leading to unfreezing.

 \vspace{-.05in}
\paragraph{Scalability.} 
The running time increases rapidly with larger settings, as expected. 
We currently implement GGMS with a single threaded Python program (except the verifier, which is parallelized). Techniques like C implementation
and parallelization should be able to bring a significant performance improvement.
However, the exponential growth pattern of the running time will probably persist. As a result, our speculation is that we may be able to scale to 8-10 processes for some protocols, but probably not to 100 processes. 

However, scaling to a large number of processes may not be necessary to design a distributed protocol. For human experts, a general practice is to 1) design a protocol for a small number of processes, then 2) distill the insights from these small instances and generalize these insights for a generic protocol with an arbitrary number of  processes, and finally 3) prove the correctness of the generic protocol. 
Therefore, as long as our tool can accomplish step 1), it may provide useful insights for human experts. Automating step 2) is our future work, probably involving the help
from LLMs. 
3) is well established in the distributed system community \citep{ma2019i4,yao2021distai,zhang2025basilisk}.


\begin{figure}[t]
    \centering
    \includegraphics[width=0.48\textwidth]{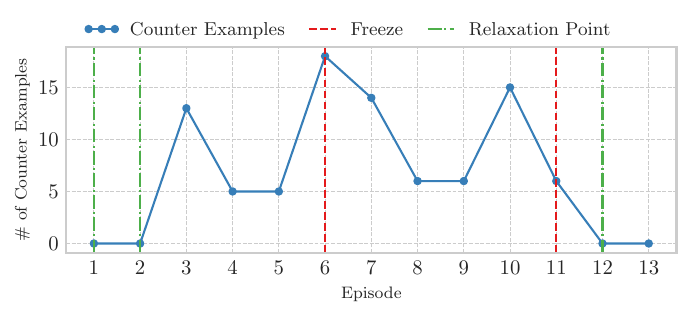}
    \caption{Number of counter examples for ac-3-2 (no unfreezing).}
    \vspace{-.1in}
    \label{fig:cs_plot}
\end{figure}

\begin{figure}[t]
    \centering
    \includegraphics[width=0.48\textwidth]{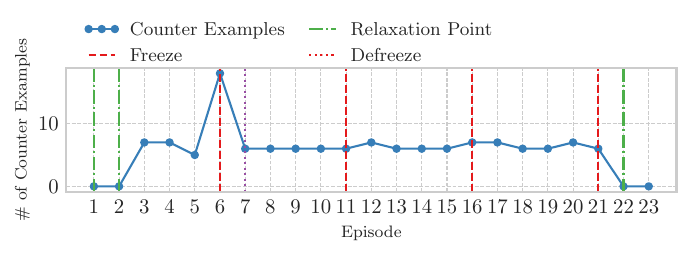}
    \caption{Number of counter examples for ac-3-2 (with unfreezing).}
     \vspace{-.1in}
    \label{fig:cs_plot_defreeze}
\end{figure}

\vspace{-.05in}
\paragraph{Number of counter examples during training.}
Figure~\ref{fig:cs_plot} shows the number of counterexamples found by our verifier in one setting.
At the beginning, the number of counterexamples is low, since guided MCTS focuses on some specific
scenarios. Then, after relaxation, guided MCTS starts to explore more scenarios, leading to more
counterexamples. GGMS performs two rounds of freezing, and the number of counterexamples drops to
zero. Finally, after more relaxation, GGMS does not find more counterexamples. 
Figure~\ref{fig:cs_plot_defreeze} presents another example.  GGMS first freezes a transition. It then discovers that, under this frozen condition, no correct model can be learned. Consequently, GGMS unfreezes the transition and continues training. The system eventually learns a correct model, though it requires more time compared to the example shown in Figure~\ref{fig:cs_plot}.

\vspace{-.05in}
\paragraph{Effects of ML models.}
Our current implementation uses the Transformer model as discussed in Section~\ref{sec:model-parameters}.
We also tried the MLP model. For the following settings ac-2-1, con-2-1, ac-3-2, and con-3-2, the success
rate of using the MLP model is 100\%, 70\%, 10\%, and 20\%, respectively, compared to 100\%, 100\%, 60\%, and 100\%
of using the Transformer model.

\section{Future work}
\label{sec:future}

\vspace{-.05in}
\paragraph{Integrating with LLMs.} 
Recent work on LLM-based reasoning and code generation, including AlphaProof and AlphaEvolve~\citep{AlphaProof,AlphaEvolve}, 
demonstrates the power of combining learned models with formal verification or evaluation oracles. 
Our experiments with GPT and Gemini (see Appendix~\ref{sec:GPT} for details) have confirmed this: they could find the FloodSet protocol for consensus, but
often needed help (we manually verify the protocol and provide counter examples) for atomic commit, which has no known solutions. Therefore,
we believe that it is still challenging for LLMs to fully accomplish the discovery, especially for more complicated protocols, but their
capabilities may be greatly enhanced by external tools like this work. In the future, we will explore whether we can fuse the strengths of LLMs and our work.
In particular, our work can provide verification, (counter-)examples, and exploration for LLMs, and
LLMs can synthesize a generic protocol that works for an arbitrary number of processes, which is 
a challenge for our work.


\vspace{-.1in}
\paragraph{Extending this work.}
\S\ref{sec:modeling} discusses several limitations of this work.
Allowing a process to make a decision early before the last round is a straightforward extension of this work.
Allowing a process to send different messages to different processes will significantly increase the search space.
As a middle ground, we may allow a process to send its state to a subset of processes.
To model asynchronous networking, we need to change our simulator to include more complicated message loss scenarios.
To model Byzantine processes, we can incorporate adversarial models so that two models play against each other.


\section{Related Work}
\label{sec:relate}

Modern ML systems that combine self-play with search have achieved superhuman performance in games with both full and partial observability~\citep{mnih2015human,silver2017mastering,li2020suphx,bard2020hanabi,brown2019superhuman}. Designing distributed protocols is harder in two ways: processes have only local views (partial observability), and we require guaranteed correctness under faults and message loss rather than high average-case performance. To the best of our knowledge, prior work does not learn distributed protocols with formal correctness guarantees.

Closer to our problem, \citet{khanchandani2021learning} use self-play to learn algorithms for a distributed directory problem, while \citet{Zhang_Li_Chen_Li_Zhang_Zhao_Liu_Chen_2025} and \citet{Belardinelli2024VerificationMAS} combine reinforcement learning with model checking for multi-agent systems. These approaches learn policies or algorithms and use verification as a shaping signal or evaluation tool. In contrast, GGMS uses model checking as a hard oracle that accepts or rejects entire candidate protocols.

\vspace{-.1in}
\paragraph{Positioning among synthesis and verification approaches.} GGMS occupies a distinct niche among related methods. Classical program synthesis via counterexample-guided inductive synthesis (CEGIS)~\citep{SolarLezama2008Sketching} iteratively refines candidates using counterexamples, but operates on sequential programs and relies on SMT solvers to propose candidates—an approach that struggles with the combinatorial explosion of distributed state spaces. 
Automata learning methods~\citep{Higuera2005Study,Heule2010ExactDFA} infer state machines from labeled traces, but assume access to a correct protocol's executions—precisely what we aim to discover. Verification frameworks for distributed systems~\citep{hawblitzel2015ironfleet,ma2019i4,yao2021distai,zhang2025basilisk} prove correctness of human-designed protocols but do not synthesize them.

GGMS bridges these gaps: it uses MCTS-guided search to navigate the protocol space efficiently, model checking as a hard correctness oracle (not just a shaping reward), and counterexample-driven retraining in a CEGIS-like loop—but applied to multi-agent state machines under adversarial fault models. GGMS does not depend on existing solutions, which
makes it complementary to LLM based approaches. Appendix~\ref{app:competing-approaches} provides further detailed comparisons across these approaches.

\section{Conclusion}

This work explores the new area of learning provably correct
distributed protocols with partial observability. 
We propose GGMS, which combines model checking to ensure correctness,
DFS to ensure eventual convergence, and guided MCTS to accelerate convergence.
As a preliminary exploration, we further discuss possible future directions.

\bibliography{reference}
\bibliographystyle{icml2026}

\newpage
\appendix
\onecolumn

\clearpage
\begin{appendix}

\setcounter{figure}{0}
\setcounter{table}{0}
\renewcommand{\thefigure}{S\arabic{figure}}
\renewcommand{\thetable}{S\arabic{table}}

\section{Formal definitions of investigated protocols}
\label{sec:formal-properties}

The complete definitions of each state are provided in Table~\ref{table:state-definitions-ac} and Table~\ref{table:state-definitions-consensus}, for the atomic commit and consensus protocols, respectively. Based on these state definitions, we formally define the properties of each protocol following prior work. Let $N$ denote the set of all processes. Each process has an initial state $b_n$, and a set of final decisions, $F_n$. Let $M$ represent all messages exchanged between processes, and let $L=\exists m \in M (m=Lost)$ indicate that there exists at least one lost message.

\begin{table}[htbp]
\footnotesize
\centering
\begin{tabular}{ll}
\hline
\textbf{State} & \textbf{Meaning} \\
\hline
\texttt{init:abort}         & Initial state representing intent to abort \\
\texttt{init:commit}         & Initial state representing intent to commit \\
\texttt{internal:a}     & Internal state \\
\texttt{internal:b}     & Internal state \\
\texttt{decision:abort}     & Final decision to abort \\
\texttt{decision:commit}     & Final decision to commit \\
\hline
\end{tabular}
\caption{State Definitions for Atomic Commit Protocol}
\label{table:state-definitions-ac}
\end{table}

\begin{table}[htbp]
\footnotesize
\centering
\begin{tabular}{ll}
\hline
\textbf{State} & \textbf{Meaning} \\
\hline
\texttt{init:0}         & Initial state representing intent to commit 0 \\
\texttt{init:1}         & Initial state representing intent to commit 1\\
\texttt{internal:a}     & Internal state \\
\texttt{internal:b}     & Internal state \\
\texttt{decision:0}     & Final decision to commit 0 \\
\texttt{decision:1}     & Final decision to commit 1\\
\hline
\end{tabular}
\caption{State Definitions for Consensus Protocol}
\label{table:state-definitions-consensus}
\end{table}

\paragraph{Atomic commit protocol properties.} 
Rule P1 states that no two processes should reach conflicting decisions. Rule P2 ensures that if all processes initially intend to commit and there are no message losses, then all processes must reach a commit decision. Rule P3 indicates that if any process initially intends to abort, then all processes must reach an abort decision. Rule P4 requires that every process must eventually reach a final decision.
\begin{enumerate}
\item[\textbf{P1.}]
\begin{multline}
\neg \exists n, m \in N \; \exists f_n, f_m \in F \;\\ \big( 
f_n = \texttt{decision:abort}  \land f_m = \texttt{decision:commit} \big)
\end{multline}

\item[\textbf{P2.}]
\begin{multline}
\bigl(\forall n\in N,\; b_n=\text{\ttfamily init:commit}\bigr)\land(\neg L)\\
\Rightarrow\bigl(\forall n\in N,\ \forall f_n\in F_n,\ f_n=\text{\ttfamily decision:commit}\bigr).
\end{multline}

\item[\textbf{P3.}]
\begin{multline}
\left( \exists n \in N,\, b_n = \texttt{init:abort} \right)\\ \Rightarrow  {} 
\left( \forall n \in N,\, \forall f_n \in F_n,\, f_n = \texttt{decision:abort} \right)
\end{multline}

\item[\textbf{P4.}]
\begin{multline}
\forall n \in N, \exists f_n \in F \; \big( f_n = \texttt{decision:abort}
 \lor f_n = \texttt{decision:commit} \big)
\end{multline}

\end{enumerate}

Note that the well-known atomic protocols like two-phase commit (2PC) and three-phase commit (3PC) make
the asynchronous networking assumptions that every message has a non-zero chance of being lost. As discussed
in Section~\ref{sec:modeling}, our current implementation supports only synchronous networking assumptions.
Such a difference leads to a difference in the protocol properties: In the asynchronous versions, we can prove
that there always exist scenarios that a correct process cannot decide. In the synchronous versions,
such scenarios do not exist.

\paragraph{Consensus protocol properties.} 
Rule P1 states that no two processes should reach conflicting decisions. Rules P2 and P3 assert that if all processes start with the same initial proposal, then all final decisions must match that proposed value. Rule P4 requires that every process must reach a decision by the end of the protocol.
\begin{enumerate}

\item[\textbf{P1.}]
\begin{multline}
\neg \exists n, m \in N \; \exists f_n, f_m \in F \; \big( 
f_n = \texttt{decision:0}  \land f_m = \texttt{decision:1} \big)
\end{multline}

\item[\textbf{P2.}]
\begin{multline}
\left( \forall n \in N, b_n = \texttt{init:1} \right) \Rightarrow  {}
\left( \forall f_n \in F, f_n = \texttt{decision:1} \right)
\end{multline}

\item[\textbf{P3.}]
\begin{multline}
\left( \forall n \in N, b_n = \texttt{init:0} \right) \Rightarrow  {}
\left( \forall f_n \in F, f_n = \texttt{decision:0} \right)
\end{multline}

\item[\textbf{P4.}]
\begin{multline}
\forall n \in N, \exists f_n \in F \; \big( f_n = \texttt{decision:0}
 \lor f_n = \texttt{decision:1} \big)
\end{multline}

\end{enumerate}

\section{Proof of Theorem}
\label{sec:proof}

\setcounter{theorem}{1}
\begin{theorem}
    In a feasible setting, assuming 1) the scenario $Sc$ of this particular simulation leads to a definite decision (i.e, no ambiguity),
    and 2) for each pair of (round, procID), this step only fixes one transition
    (round, procID, inputA) $\rightarrow$ B, then there exists a correct state machine with all
    the fixed transitions ($Transitions_{fix}$) in this step.
\end{theorem}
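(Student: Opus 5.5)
Start from an arbitrary correct state machine $SM^*$, which exists because the setting is feasible. Then modify it, one (round, procID) pair at a time, by renaming internal states, until it agrees with every transition in $Transitions_{fix}$. The modified machine should remain correct because renaming internal states is a symmetry of the problem. The correctness properties $C$ mention only initial states, decision states and the lost symbol $L$; internal states carry no meaning beyond how they are used in later transitions.

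\textbf{Step 1 (the relabeling lemma).} Fix a round $t<r$, a procID $p$, and a permutation $\pi$ of the internal states. Define $SM'$ as follows.
\begin{itemize}
\item Every output of $SM^*$ at $(t,p,\cdot)$ that is an internal state $s$ is replaced by $\pi(s)$.
\item At round $t+1$, the transition function reads its input through $\pi^{-1}$. The own-state component is inverted for process $p$, and so is the component of every received vector that came from $p$.
\end{itemize}
By induction over rounds, for every scenario $Sc'$ the execution of $SM'$ coincides with that of $SM^*$ up to applying $\pi$ to $p$'s state after round $t$. Lost messages stay $L$, and initial and decision states are untouched. Hence the final decisions are identical, and $SM'$ satisfies $C$ on all scenarios. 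The construction relies on the state machine taking (round, procID, inputs) as input, so that the rename can be localized to one pair.

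\textbf{Step 2 (matching the frozen transitions).} By assumption 2, each pair $(t,p)$ has exactly one frozen transition $(t,p,A_{t,p})\to B_{t,p}$. Moreover, every such $A_{t,p}$ is the input actually reached in the single scenario $Sc$ when the frozen path is followed. I will proceed round by round, $t=1,\dots,r$, and maintain the invariant that the current correct machine agrees with all frozen transitions of earlier rounds. Since it agrees with them, it reaches exactly the inputs $A_{t,p}$ along $Sc$ at round $t$. Let $C_{t,p}$ be its output there.
\begin{itemize}
\item If $t<r$ and both $B_{t,p}$ and $C_{t,p}$ are internal, apply Step 1 with the transposition swapping $B_{t,p}$ and $C_{t,p}$. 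Transitions at other pairs, including earlier frozen ones, are unaffected, except at round $t+1$, which is handled in the next iteration.
\item If $t=r$, both outputs are decision states. Assumption 1 says $Sc$ has a definite decision, and correctness of the current machine forces $C_{r,p}$ to equal that decision. The frozen path was chosen to reach a correct decision, so $B_{r,p}=C_{r,p}$.
\end{itemize}

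\textbf{Main obstacle.} The delicate case is when $B_{t,p}$ and $C_{t,p}$ fall in different categories at an intermediate round, for instance one internal and one decision or initial state. Renaming is then not available. I would first try to argue that the model restricts intermediate-round outputs to internal states, so the categories always coincide. Failing that, I would extend the permutation to all non-initial, non-decision, non-$L$ symbols that are allowed as intermediate outputs. I also need to check that the round-$(t+1)$ reinterpretation commutes with the later swap at $(t+1,p)$; it does, because that swap acts on outputs while the reinterpretation acts on inputs. At the end, the resulting machine is correct and contains all of $Transitions_{fix}$.
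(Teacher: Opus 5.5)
Your proposal is correct and follows essentially the same route as the paper's proof. Both start from a correct machine, swap $B$ and $C$ at each frozen (round, procID) for rounds $1,\dots,r-1$ while swapping the corresponding component of the round-$(t+1)$ input vectors, argue equivalence, and then use the definite decision of $Sc$ to show that the round-$r$ frozen transitions already agree. The category issue you flag is harmless, and the paper does not raise it: decisions occur only in round $r$, and the properties read only the scenario and the round-$r$ outputs, so your Step 1 argument never needs $B,C$ to be internal and the swap is valid for any two output symbols at an intermediate round.
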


\begin{proof}
    Assuming there exists a correct state machine $SM_0$, we prove that we can construct
    a state machine $SM_{r-1}$, such that, 1) $SM_0$ and $SM_{r-1}$ are equivalent, i.e.,
    for every scenario $S$, every non-crashed process makes the same decision with $SM_0$ and $SM_{r-1}$;
    and 2) $SM_{r-1}$ includes all the transitions in $Transitions_{fix}$.

    We construct by induction of $r-1$ steps ($r$ is the number of maximal rounds).
    Assuming $SM_{i-1}$ is already constructed ($1<i<r$), we construct 
    $SM_i$ in the following way. We search for procID, such that transition $[round=i, procID, inputA] \rightarrow B$
    exists in $Transitions_{fix}$ and $[round=i, procID, inputA] \rightarrow C$ exists in $SM_{i-1}$ and B is different
    from C. Then we swap B and C in $SM_{i-1}$ to get $SM_i$: 1) For any input, if there exists a transition 
    $[i, procID, input] \rightarrow C/B$ in $SM_{i-1}$, we change it to $[i, procID, input] \rightarrow B/C$ in $SM_i$; 2) In round i+1, for any procID2, if there exists a transition $[i+1, procID2, input] \rightarrow D$ in $SM_i$, where the input vector includes C (or B) from procID, we change B into C and C into B in the input vector.
    
    First, we can prove that $SM_i$ includes transitions in $Transitions_{fix}$ whose round is smaller than
    i+1. We prove by induction. When $i=1$, $SM_1$ is constructed from $SM_0$. Under $Sc$,
    $SM_0$ and $Transitions_{fix}$ must reach the same $[round=1, procID, inputA]$ in the first round, 
    since the input is from initial states. And if a $[round=1, procID, inputA]$ transit to different states
    in $SM_1$ and $Transitions_{fix}$, our construction changes the output of
    the transition in $SM_1$ to match that in $Transitions_{fix}$. Then for the later round $i$,
    we can prove it in the same way. Under $Sc$, $Transitions_{fix}$ and $SM_{i-1}$ must reach
    the same $[round=i, procID, inputA]$ in round $i-1$, as they have the same transitions for
    $Sc$. Then if $[round=i, procID, inputA]$ transit to different states in $Transitions_{fix}$ and $SM_{i-1}$,
    our construction forces them to be the same in $Transitions_{fix}$ and $SM_i$
    Note that this only works if for each pair of (round, procID), $Transitions_{fix}$ only
    includes one transition. Otherwise, the construction may need to swap multiple pairs
    of values, and they may conflict,
    which means the construction may not be possible  (e.g., we cannot both swap B and C and swap B and D).
    
    Second, we can prove that $SM_{i-1}$ and $SM_i$ are equivalent. Assuming a simulation applies transition
    $(i, procID, input) \rightarrow B/C$ in $SM_{i-1}$, the simulation will get the same input for procID in round i when applying $SM_i$,
    as $SM_{i-1}$ and $SM_i$ have the same set of transitions before round i. Then the simulation will apply 
    $(i, procID, input) \rightarrow C/B$ in $SM_i$, i.e., output of $SM_{i-1}$ and $SM_i$ swap B and C for procID.
    However, since our construction also swaps B and C in the input of round i+1 from procID, we can know that in round i+1, every process will still transit to the same state. Therefore, $SM_{i-1}$ and $SM_i$ are equivalent.

    With the above steps,
    we can construct a state machine $SM_{r-1}$ that is equivalent to $SM_0$ and that includes all the transitions
    from $Transitions_{fix}$ till round r-1. In round r, every
    state machine transits to the decision state. We can prove that $SM_{r-1}$ must have the same transitions as those in
    $Transitions_{fix}$ for round r, with no need for swapping. This is because, for $Sc$ that generates
     $Transitions_{fix}$, $SM_{r-1}$, and $Transitions_{fix}$ will apply the same transitions till round r-1, so every process
     should have the same input at the beginning of round r. If $SM_{r-1}$ and $Transitions_{fix}$ have different transitions for the same input
     in round r, they will lead to different decisions of some processes.
     This contradicts our assumption that this scenario leads to a definite decision.

    Therefore, we have proved that $SM_{r-1}$, which is equivalent to $SM_0$ and must be correct, has all the transitions of
   $Transitions_{fix}$.
     
\end{proof}

\section{Implementation Details}
\label{sec:pseudo}

\subsection{Overview}
\label{sec:pseudo-overview}

\begin{algorithm}[t]
\caption{Pseudo code of GGMS}
\label{alg:ggms}
\SetCommentSty{textit}
\SetKwProg{Fn}{Function}{:}{end}
\SetKwComment{MyComment}{/* }{ */}
$model$ $\gets$ init\_model()\; \label{line:init}
$phase\_ID$ $\gets$ 0\;
$training\_buffer$ $\gets$ []\;
$failed\_scenarios$ $\gets$ []\;
$freeze\_list$ $\gets$ []\;
\MyComment{Each main loop iteration is an episode}
\While{true}{
    \For{$i \gets 1$ \KwTo $100$} {
        $scenario$ $\gets$ \texttt{sample\_scenario}($phase\_ID$, $failed\_scenarios$)\; \\
        $training\_data, reward$ $\gets$ \texttt{run\_mcts}($scenario$, $model$)\; \\
        $training\_buffer.\texttt{append}(training\_data)$\; \\
        \texttt{determine\_unfreeze}($reward$, $freeze\_list$)\; \\
    }
    \texttt{determine\_freezing}($training\_buffer$, $freeze\_list$)\; \\
    \texttt{update\_model}($training\_buffer$, $model$)\; \\
    $failed\_scenarios$ $\gets$ \texttt{validate}($phase\_ID$, $model$)\; \\
    \If{$failed\_scenarios == [\,]$}{
        \If{$phase\_ID == lastPhase$}{
            \texttt{terminate}\;
        }
        \Else{
        $phase\_ID \gets phase\_ID + 1$\;
        }
    }

}
\end{algorithm}

Algorithm~\ref{alg:ggms} presents the high-level pseudo code of GGMS.

\texttt{model} is a neural network representing the state machine we want to learn. As discussed in
Section~\ref{sec:modeling}, it takes $[round,procID,inputStates]$ as the input and
outputs a $newState$. In fact, to facilitate learning, we let it output a probability
for each potential value of $newState$. During inference, GGMS will choose the value
with the highest probability.

\texttt{phase\_ID} is used to implement the guided MCTS (Section~\ref{sec:guided}).
When set to 0, it means that GGMS will only simulate scenarios with definite initial
states and message losses in the last round. When set to 1, it means that GGMS will
simulate scenarios with definite initial states and message losses in the last two rounds,
etc. Finally, when set to $r$ (the total number of rounds), it means that GGMS
can use any initial state and message losses in any round.

\texttt{training\_buffer} is a bounded buffer to store training data collected during
MCTS. Each item in the buffer records the probability of a transition from $[round,procID,inputStates]$
to one potential output value.

\texttt{failed\_scenarios} records scenarios that caused prior validation to fail. As discussed,
GGMS will use such scenarios to retrain the model.

\texttt{freeze\_list} is used to implement DFS (Section~\ref{sec:dfs}). Like a conventional DFS implementation, \texttt{freeze\_list} is a stack, and each item in the stack represents one frozen transition 
$[round,procID,inputStates] \rightarrow newState$. Each item also records whether other values have
been frozen for the same $[round,procID,inputStates]$ in the past. 

The whole algorithm works in multiple episodes. In each episode, it runs MCTS on 100 scenarios (line 7).
Each scenario is either randomly chosen from scenarios allowed by the current phase or from past failed
scenarios (line 8). Running MCTS on the scenario will generate some training data, which will be
added to the training buffer, and a reward (lines 9-10). GGMS will determine whether it needs to unfreeze depending
on the reward (line 11).

Then, after simulating 100 scenarios, GGMS will determine whether it needs to freeze more transitions based on
the new training data (line 13). Note that unfreeze and freeze are determined at different timings: If MCTS
cannot find a good model for one scenario, that is already enough to trigger unfreeze, and that is why unfreeze
is determined after simulating every scenario. However, determining freezing often requires information
from multiple scenarios, which can lead to potentially different transitions for the same input, and that is
why GGMS determines freezing after trying a number of scenarios.

Then GGMS updates the model using the new training data (line 13) and then validates the new model (line 14).
If validation does not find any failed scenarios, GGMS will either terminate if this is the last phase, or
proceed to the next phase otherwise (lines 16-23).

\subsection{Use counterexamples to retrain}
\label{sec:pseudo-retrain}

\begin{algorithm}[t]
\caption{Pseudo code of choosing a scenario}
\label{alg:sample-scenario}
\SetCommentSty{textit}
\SetKwProg{Fn}{Function}{:}{end}
\SetKwComment{MyComment}{/* }{ */}
\Fn{\texttt{sample\_scenario}($phase\_ID$, $failed\_scenarios$)}{
    $u$ $\gets$ \texttt{Uniform}(0, 1)\; \\
    
    \If{$u < 0.3$ \textnormal{ or } $failed\_scenarios == [ ]$}{
        $scenario$ $\gets$ \texttt{uniform\_sample}(\texttt{generate\_all\_scenarios}($phase\_ID$))\; \\
    }
    \Else{
        $scenario$ $\gets$ \texttt{uniform\_sample}($failed\_scenarios$)\; \\
    }
    
    \KwRet $scenario$\; \\
}

\end{algorithm}

Algorithm~\ref{alg:sample-scenario} presents the details of how GGMS selects scenarios
to simulate. It has a 70\% chance to choose a failed scenario in the past, if any, and
30\% chance to randomly choose a scenario allowed by the current phase.

\subsection{Enhancing MCTS with DFS}
\label{sec:pseudo-dfs}

\begin{algorithm}[t]
\caption{Pseudo code of determining unfreezing}
\label{alg:determine-unfreeze}
\SetCommentSty{textit}
\SetKw{KwBreak}{Break}
\SetKw{KwContinue}{Continue}
\SetKwProg{Fn}{Function}{:}{end}
\SetKwComment{MyComment}{/* }{ */}
\Fn{\texttt{determine\_unfreeze}($reward$, $freeze\_list$)}{
    \label{func:determine_unfreeze_impl}
    \MyComment{Unfreeze when reward is negative and some frozen transition was activated}
    \If{$reward < 0$ \textnormal{and} \texttt{has\_activated}($freeze\_list$)}{
        \MyComment{At least one entry can be popped out}
        \While{true}{
            $entry$ $\gets$ $freeze\_list.\texttt{pop\_one\_activated}()$\; \\
            \If{$entry$ is not fully explored}{
                \texttt{freeze\_to\_new\_value}($entry$)\; \\
                $freeze\_list.\texttt{push}(entry)$\; \\
                \KwBreak\; \\
            }
        }
    }
}
\end{algorithm}

Algorithm~\ref{alg:determine-unfreeze} presents the logic for determining whether to unfreeze a frozen transition and, if so, which transition to unfreeze.
Our current implementation uses the condition that the reward is negative, which means
that MCTS cannot find a model to reach correct decisions for this scenario, and at least one frozen transition
is activated (line 2). In our current implementation, MCTS is given enough time to fully explore
all state machines relevant to this scenario, so the negative reward is an accurate condition
to trigger unfreezing. However, such exhaustive search scales poorly.
For better scalability, we may replace this condition
with a Z3-style validation to prove that, given the frozen transitions, the model can
never reach correct decisions for the corresponding scenario, no matter what other
transitions this model includes. We will explore this in the future.

If the condition is met, GGMS unfreezes transitions in the DFS manner. It pops an activated transition
from the stack (line 4). If the corresponding input of the transition is not fully explored (line 5),
which means that GGMS has not tried to freeze it to all the possible values, GGMS will try to freeze
it to a value that has not been explored (line 6) and push this entry back into the stack.
Otherwise, GGMS will keep popping until it can find such an entry.

\begin{algorithm}[t]
\caption{Pseudo code of determining freezing}
\label{alg:determine-freeze}
\SetCommentSty{textit}
\SetKwProg{Fn}{Function}{:}{end}
\SetKwComment{MyComment}{/* }{ */}
\Fn{\texttt{determine\_freezing}($training\_buffer$, $freeze\_list$)}{
    \label{func:determine_freezing_impl}
    \MyComment{Find inputs whose outputs have close probabilities and freeze one}
    $tmp$ $\gets$ \texttt{find\_ambiguous\_inputs}($training\_buffer$, $p\_min{=}0.2$, $diff\_max{=}0.1$)\; \\ \label{line:df_find}
    \MyComment{Sort: later round first, then fewer lost messages first}
    $tmp$ $\gets$ \texttt{sort}($tmp$, \texttt{key=}[\texttt{round\_desc}, \texttt{lost\_msgs\_asc}])\; \\

    \If{$tmp \neq []$}{
        $cand$ $\gets$ $tmp[0]$\; \\
        $entry$ $\gets$ $cand.freeze\_outputs$\; \\
        $freeze\_list.\texttt{push}(entry)$\; \\
    }
}
\end{algorithm}

Algorithm~\ref{alg:determine-freeze} presents the logic of determining whether to freeze
a new transition and, if so, which one to freeze. Our current implementation uses
the heuristics that if multiple outputs for the same input have a probability larger
than 0.2 and the difference between their probabilities is within 0.1, then GGMS considers
them as targets for freezing (line 2). If multiple such inputs exist, GGMS sorts
them based on their round number and the number of lost messages in the input and chooses
the one with the highest round number and the lowest number of lost messages as input (lines 3-5).
This is based on our experience in the importance of such transitions. Note that this
heuristic does not affect the eventual convergence of DFS. Finally, GGMS pushes the newly
determined frozen transition into \texttt{freeze\_list}.

\subsection{Brute-force validation}
\label{sec:pseudo-validation}

\begin{algorithm}[t]
\caption{Pseudo code of validation}
\label{alg:validate}
\SetCommentSty{textit}
\SetKwProg{Fn}{Function}{:}{end}
\SetKwComment{MyComment}{/* }{ */}
\Fn{\texttt{validate}($phase\_ID$, $model$)}{
    \label{func:validate_impl}
    \MyComment{Enumerate all patterns for the phase and simulate (can run in parallel)}
    $init\_state\_patterns $ $\gets$ \texttt{gen\_all\_inputs}($phase\_ID$)\; \\ 
    $msg\_loss\_patterns$ $\gets$ \texttt{gen\_loss\_patterns}($phase\_ID$)\; \\ 
    $all\_scenarios$ $\gets$ $init\_state\_patterns \times msg\_loss\_patterns$\; \\ 
    $failed$ $\gets$ []\; \\ 

    \ForEach{$scenario \in all\_scenarios$ \textnormal{ in parallel}}{
        $ok$ $\gets$ \texttt{simulate}($model$, $scenario$)\; \\ 
        \If{\textnormal{not} $ok$}{
            $failed.\texttt{append}(scenario)$\; \\
        }
    }
    \KwRet $failed$\; \label{line:v_return2}
}
\end{algorithm}

Algorithm~\ref{alg:validate} presents our brute-force algorithm to validate whether the model
is fully accurate. It enumerates all the possible scenarios by doing a cross product of
all the possible initial state patterns and all the message loss patterns (lines 2-4). Generating all
the possible initial state patterns is straightforward: Suppose that there are $N$ processes
and $x$ possible initial state values. GGMS enumerates all ways to assign initial states to
different processes, generating a total of $x^N$ patterns. Generating all message loss
patterns is more complex. GGMS first generates all possible process crash patterns given
$phase\_ID$. Then, GGMS generates all message loss patterns based on such crash patterns.
In any round before a process crashes, GGMS marks all its messages as not lost. In any round
after a process crashes, GGMS marks all its messages as lost. In the same round as a process
crashes, its messages may or may not be lost, and thus GGMS enumerates all such possibilities.
Finally, GGMS performs a cross product of the possible message loss patterns of each process.

Then GGMS simulates each of the scenarios to see whether any of them will cause the model
to reach incorrect decisions (lines 6-11). Our current implementation parallelizes such
simulation of multiple scenarios. As discussed in Section~\ref{sec:eval}, validation
is not the bottleneck of our current implementation. In the future, we plan to replace
it with a Z3-based validation for better scalability.

\subsection{Monte-Carlo Tree Search}
\label{sec:pseudo-mcts}

\begin{algorithm}[H]
\caption{Pseudo code of Monte-Carlo Tree Search}
\label{alg:mcts}
\SetCommentSty{textit}
\SetKwFunction{simulate}{simulate}
\SetKwProg{Fn}{Function}{:}{end}
\SetKwComment{MyComment}{/* }{ */}
\Fn{\texttt{run\_mcts}($scenario$, $model$, $freeze\_list$)}{
    \MyComment{$protocol$: protocol representation}
    $protocol$ $\gets$ \texttt{initial($scenario$)}\; \\ \label{line:ep_init}
    $buffer$ $\gets$ []\;\\

    \While{\textnormal{not} \textnormal{$protocol$.is\_done}()}{
        \MyComment{Run Monte-Carlo tree search from current state}
        $P$ $\gets$ \texttt{simulate}($protocol$, $model$, $freeze\_list$)\; \\
        
        \MyComment{Pick transition based on simulation results}
        $transition$ $\gets$ \texttt{select($P$)}\; \\
        
        \MyComment{Advance to next round}
        \texttt{$protocol$.step}($transition$, $scenario$)\; \\

        \MyComment{Non-zero reward only at the last round}
        $reward$ $\gets$ \texttt{get\_reward}()\;\\

        \MyComment{Record (state, simulated probabilities)}
        $buffer.\texttt{append}(\langle \texttt{current\_state}(), P \rangle)$\; \label{line:store}\\
    }
    \label{line:end}
    \KwRet $buffer$, $reward$\; \\
}
\Fn{\simulate{$protocol$, $model$, $freeze\_list$}}{
    \label{func:simulate}
    \label{line:sim_start}
    \For{$i \gets 0$ \KwTo $iter$}{ 
        \MyComment{Store visited path during simulation}
        $visited$ $\gets$ []; \\
        \While{$protocol$ \textnormal{is not done}}{ 
            \MyComment{Select transitions and message losses based on Equation~\ref{eq:ucb}}
            $transition$ $\gets$ \texttt{select\_transition($model$, $freeze\_list$)}\;\label{line:selectact} \\
            $loss$ $\gets$ \texttt{select\_message\_loss()}\; \label{line:selectcrash} \\
            $protocol$.\texttt{next($transition, loss$)}\; \\ \label{line:step}
            $visited$.\texttt{append($transition, loss$)}\;
        }
        $reward$ $\gets$ \texttt{get\_reward()}\; \label{line:evaluation}\\
        \texttt{backup($reward, visited$)}\; \label{line:backup}\\
    }
    \label{line:sim_end}
}
\end{algorithm}

Algorithm~\ref{alg:mcts} shows the details of the MCTS simulation. At the beginning (line~\ref{line:ep_init}), it initializes a new $protocol$ object, representing the full protocol execution state with a specific $scenario$. Then it simulates the protocol starting from the current state (line~\ref{func:simulate}). We will describe the \texttt{simulate} function in detail later. It returns a probability distribution over transitions for the corresponding state, which is then stored in the training buffer (line~\ref{line:store}). Next, it selects a transition based on the simulation results and moves to the next protocol state. This process repeats until the protocol terminates. 

The \texttt{simulate} function primarily consists of four components. Figure~\ref{fig:mcts} illustrates an example of MCTS: it shows the search tree constructed during the simulation of one episode, and how the path is selected through it. We will introduce these main components in both Algorithm~\ref{alg:mcts} and Figure~\ref{fig:mcts}.

\begin{itemize}[leftmargin=*]
    \item \textbf{Selection} (line~\ref{line:selectact} and line~\ref{line:selectcrash}). 
    The algorithm will traverse the tree from the root.
    When selecting a transition from the current node, if a transition is in the \texttt{freeze\_list},
    GGMS selects it directly. Otherwise, GGMS selects a transition
     based on a balance between exploitation and exploration based on Upper Confidence Bound score as shown in Equation~\ref{eq:ucb}. The transition with the highest $U(s,a)$ will be selected in the simulation. $Q(s,a)$ is the accumulative average rewards that taking transition $a$ in state $s$ during simulation. $P(s,a)$ is the probability of choosing transition $a$ in state $s$, given by the policy network. $N(s,a)$ represents the number of times transition $a$ has been chosen in state $s$ during tree search simulations. $c_{puct}$ is a constant parameter that controls the balance between exploitation and exploration. 
    The selection will terminate when the protocol terminates. All visited transitions during selection will be stored. 
    \begin{equation}
    U(s, a) = Q(s,a) + c_{puct} \cdot P(s, a) \cdot \frac{\sqrt{\sum_b N(s, b)}}{1 + N(s, a)}
    \label{eq:ucb}
    \end{equation}
    
    The selection of transitions and message losses follows the same logic but with opposing objectives: the transition selector aims to maximize the final reward, while the message loss selector aims to minimize it. Unlike transition selection, message loss selection relies solely on accumulated rewards without guidance from the network. As shown in Figure~\ref{fig:mcts}, within a single iteration of simulation, the search may follow a path such as C1~$\rightarrow$~A1~$\rightarrow$~C3~$\rightarrow$~A4 (illustrated with solid arrows). In other iterations, different paths may be selected.

    Note that, as shown in this algorithm, although each simulation has a targeted scenario, it will explore other reachable scenarios during its
    search to avoid obviously wrong transitions to other scenarios.
    
    \item \textbf{Expansion}.
    When an unexpanded node is reached, all of its available child nodes will be added to the tree for further simulation. For each new node, the visited count and the accumulative reward will be initialized to 0.
    
    \item \textbf{Evaluation} (line~\ref{line:evaluation}).
    When the $protocol$ reaches termination, the final reward is computed based on a predefined reward function (+1 if no correctness property is violated and -1 otherwise). As shown in Figure~\ref{fig:mcts}, MCTS selects a particular path in this simulation (C1~$\rightarrow$~A1~$\rightarrow$~C3~$\rightarrow$~A4). At round~1, processes P2 and P3 choose the transition \texttt{internal:a(2)}. At round~2, the only living process, P3, selects transition \texttt{decision:1(1)}. According to the reward function, this selection results in a final reward of $+1$.
    
    \item \textbf{Backup} (line~\ref{line:backup}).
    The final reward is backed up along the search path stored in \texttt{visited}. The visit count and accumulated rewards of the corresponding nodes are updated accordingly. For example, suppose the selected path in this iteration is C1~$\rightarrow$~A1~$\rightarrow$~C3~$\rightarrow$~A4, and the final reward is $+1$. In this case,  the visit count of all nodes along the path is incremented by 1. The accumulated rewards of nodes A1 and A4 are increased by 1, indicating the protocol has made the right transitions, while those of nodes C1 and C3 are increased by the opposite, $-1$, indicating the adversary is not able to defeat the protocol. These values will be used to compute the probability of each activated transition at the end of \texttt{simulate}.

\end{itemize}

\section{Examples of superposition problem in MCTS}
\label{sec:examples-convergence}

We use the consensus protocol as an example to illustrate how superposition, that is, combining
transitions from multiple correct versions of the protocol can occur and affect the
convergence of MCTS. Keep in mind that consensus requires that 1)
every process makes the same decision, and 2) if the initial input
to every process is \texttt{init:0}, then the decision must be \texttt{decision:0};
if the initial input
to every process is \texttt{init:1}, then the decision must be \texttt{decision:1};
if some processes has \texttt{init:0} as the input and some have \texttt{init:1}, then the decision
could be either \texttt{decision:0} or \texttt{decision:1}.

Human knowledge tells that, inside such a protocol, each process
should use an internal state to record its intention, and multiple processes should exchange
their intentions to resolve divergence among processes. For example, if a process
observes that all processes have \texttt{init:0} in the first round, it may change its
internal state to \texttt{internal:a}, indicating that it intends to go to decision 0. Note that
if a process
observes some \texttt{init:0} and some \texttt{Lost}, it should transit to \texttt{internal:a}
as well, since \texttt{Lost} may be from a process with \texttt{init:0}. Similarly, if a process observes \texttt{init:1}
or \texttt{Lost}, but no \texttt{init:0}, then it can transit to \texttt{internal:b}, indicating
that it intends to go to decision 1. However, if a process observes both \texttt{init:0} and \texttt{init:1},
it can transit to either \texttt{internal:a} or \texttt{internal:b}. Processes can
exchange such intentions for multiple rounds until a consensus can be reached.

There are at least two reasons for the existence of multiple versions of the correct protocols.
First, without human knowledge, the protocol may assign certain meanings to arbitrary internal states,
creating multiple equivalent protocols. For example, while the above example uses \texttt{internal:a}
for intention \texttt{decision:0} and \texttt{internal:b} for intention \texttt{decision:1},
we can swap this mapping to create an equivalent protocol. This creates a problem for MCTS. When
MCTS simulates the scenario with all processes having \texttt{init:0} as input; it may find
that it is feasible for a process to transit to \texttt{internal:a} in this case, and finally transit
to \texttt{decision:0}. When
MCTS simulates the scenario with all processes having \texttt{init:1} as input; it may also find
that it is feasible for a process to transit to \texttt{internal:a} in this case, and finally transit
to \texttt{decision:1}. Although the solution to each individual scenario is correct, combining them is
incorrect, since we should not let the all \texttt{init:0} scenario and the all \texttt{init:1} scenario
transit to the same internal state, as there is no way to distinguish them in the later rounds. 

The second reason comes from the inherent ambiguity allowed by the protocol, that is,
if some processes have \texttt{init:0} as input and some have \texttt{init:1}, then the decision
could be either \texttt{decision:0} or \texttt{decision:1}. To give a concrete example about how this
causes problems for MCTS,
suppose that there are three processes participating in this protocol. Their input states are
[\texttt{init:0}, \texttt{init:0}, \texttt{init:1}]. Suppose Process 0
crashes in the first round; its message is received by Process 1, but not Process 2 (Scenario 1).
So Process 1's input is [\texttt{init:0}, \texttt{init:0}, \texttt{init:1}] (Input A) and Process'2
input is [Lost, \texttt{init:0}, \texttt{init:1}] (Input B). Due to the ambiguity of the initial input,
we know that Input A and Input B can transit to either \texttt{internal:a} or \texttt{internal:b}, as long
as they transit to the same internal state. If MCTS only simulates this scenario, it may find this constraint
and thus give
$[A \rightarrow \texttt{internal:a}]$ and $[B \rightarrow \texttt{internal:a}]$ a
higher probability than $[A \rightarrow \texttt{internal:b}]$ and $[B \rightarrow \texttt{internal:b}]$. However, Input A and Input B may appear separately in other scenarios as well.
For example, if Process 0 does not crash in the above example, all processes will have Input A but no Input B (Scenario 2); if Process 0 crashes and both Process 1 and Process 2 miss its message, then both will have
Input B but no Input A (Scenario 3). Since MCTS simulates each of these scenarios independently, it may end up
preferring $[A \rightarrow \texttt{internal:a}]$ in Scenario 2 and preferring $[B \rightarrow \texttt{internal:b}]$
in Scenario 3. And when MCTS combines the probabilities from all scenarios, it may let A and B transit
to different internal states. Again, in this example, MCTS finds a correct solution for each scenario,
but it is incorrect to combine those solutions.

\section{Model Selection and Hyperparameters}
\label{sec:model-parameters}
For model selection, we use a Transformer model, which has become popular in language modeling tasks. The architecture of this model is presented in Table~\ref{table:transformer}. The Transformer model includes only the encoder component of the standard Transformer architecture, as the task does not require contextual information like translation tasks. Additionally, a one-hot encoding layer is used at the input to transform categorical values into unique vectors. The Transformer encoder outputs a tensor of shape {batch $\times$ input\_length $\times$ hidden\_dim}. We apply a global pooling layer to aggregate the outputs across the input sequence into a vector of shape  {batch $\times$ hidden\_dim}, which is then passed through an output layer to obtain the final prediction vector. We also tried the MLP model. Table~\ref{table:mlp} demonstrates the MLP model architecture that we use. It contains three fully connected layers (FC) and one output layer. We use ReLU as the activation function after each FC layer. We use cross-entropy as the loss function for both models.


\begin{table}[t]
\centering
\small
\begin{tabular}{ccc}
\hline
\textbf{Layer Type}     & \textbf{Input Shape} & \textbf{Output Shape} \\ \hline
\textbf{One-hot Encoder}   & (input)      &    \makecell{(input,  encode\_dim)}            \\
\textbf{Transformer Encoder}& \makecell{(input,  encode\_dim)}    &    \makecell{(input, hidden)}             \\ 
\textbf{GlobalAveragePooling1D}           & \makecell{(input, hidden)}     &    (hidden)             \\    
\textbf{Output}& (hidden)      &    (output)               \\
\hline
\end{tabular}
\caption{The Transformer architecture used to learn the state machine}
\label{table:transformer}
\end{table}

\begin{table}[t]
\centering
\small
\begin{tabular}{cccc}
\hline
\textbf{Layer Type}     & \textbf{Size} & \textbf{Output} \\ \hline
\textbf{FC-1}   & 3x128      &    1x128             \\
\textbf{FC-2}& 128x64     &    1x64             \\ 
\textbf{FC-3}           & 64x32      &    1x32              \\    
\textbf{Output}& 32x3      &    1x3               \\
\Xhline{2\arrayrulewidth}
\hline
\end{tabular}
\caption{The MLP architecture used to learn the state machine}
\label{table:mlp}
\end{table}

\section{Experiment Setting}
\label{sec:experiment-setting}

We run all experiments on CloudLab. The server we use is equipped with a 16-core AMD 7302P CPU running at 3.00GHz and has 128GB of memory. We implement the system in Python and use Keras for model training. The learning rate is set to 0.001, and the cross-entropy loss function is used to update the model parameters.
In our experiments, we simulate 100 scenarios in each episode. For each simulation, the number of iterations is determined by the scale of the setting, calculated as $\#\texttt{rounds} \times \#\texttt{processes} \times 1000$. We freeze a new transition every 5 episodes to ensure that all previously frozen transitions have propagated.
To detect training failure, we set a timeout for each run based on the scale. Specifically, we allocate 1 day for 2-process settings, 2 days for 3-process settings, and 4 days for 4-process settings. 
Some distributed protocols require all processes to make the same decision, so a process
may not need the process ID as the input, since processes with the same input, regardless of the process ID, should transit
to the same state. We find that this is true for both the atomic commit
protocol and the consensus protocol we have investigated, so we disabled the process ID in our experiments to
reduce search space.

\section{Competing Approaches}
\label{app:competing-approaches}

We briefly position GGMS among several related families of methods, focusing on differences in goals and outcomes.

\paragraph{Classical distributed protocols and verification.}
Classical distributed protocols such as two-phase commit and atomic commit~\citep{gray1978notes,gray2005notes,Corbett2012Spanner,Zhang2015Tapir}, Paxos and its variants~\citep{lamport2001paxos,Lamport1998Parliament,Lamport1982Byzantine,Moraru2013EPaxos,Ongaro2014Raft,Castro1999PBFT,Kotla2007Zyzzyva,Giridharan2024Autobahn} are hand-designed by experts and then validated by formal proofs or model checking. Frameworks such as IronFleet, I4, DistAI, and Basilisk~\citep{hawblitzel2015ironfleet,ma2019i4,yao2021distai,zhang2025basilisk} start from a human-written protocol and focus on proving its correctness, often for arbitrarily many processes. In contrast, GGMS treats the protocol itself as the object of synthesis: starting only from a specification of safety properties and a fixed number of processes, it searches in the space of global state machines and uses exhaustive model checking to ensure that the learned protocol satisfies the specification. Thus, classical work assumes the protocol and proves properties, whereas GGMS automates protocol construction and then certifies the resulting state machine (for the chosen bound).

\paragraph{Self-play and learning in multi-agent systems.}
Self-play deep reinforcement learning with tree search has achieved superhuman performance in perfect- and imperfect-information games such as backgammon, Atari, Go, poker, and Hanabi~\citep{tesauro1995temporal,mnih2013playing,mnih2015human,silver2017mastering,lample2017playing,wan2018robot,tan2019cooperative,li2020suphx,bard2020hanabi,brown2019superhuman,DBLP:conf/nips/AnthonyTB17,doi:10.1126/science.aay2400,DBLP:conf/iclr/SudhakarNRLRC25}. These methods aim to maximize expected reward and typically output a neural policy that is not formally verified. GGMS borrows Monte Carlo Tree Search and self-play as search and data-generation tools, but with a different objective: instead of winning a game on average, it must synthesize a protocol that satisfies strict safety properties under worst-case crashes and message losses. 

Closer to our setting, \citet{khanchandani2021learning} use self-play to learn algorithms for a distributed directory problem, and verification-guided multi-agent RL approaches (e.g., \cite{Zhang_Li_Chen_Li_Zhang_Zhao_Liu_Chen_2025,Belardinelli2024VerificationMAS}) combine learning with model checking to enforce temporal-logic specifications. These methods learn policies or algorithms and typically use model checking as a verification step or shaping signal, with correctness evaluated on sampled executions or specific scenarios. GGMS instead searches directly in a symbolic space of global state machines for asynchronous crash-prone message-passing systems, and uses model checking as a hard oracle: any violating protocol is rejected, and counterexamples are fed back into training until no counterexamples remain within the explored state space. In this sense, GGMS is closer to protocol synthesis with built-in verification than to performance-driven policy learning.

\paragraph{Automata learning and program synthesis.}
The global state machine representation in GGMS is reminiscent of deterministic finite automata (DFA). Classical DFA learning methods~\citep{Higuera2005Study,Heule2010ExactDFA} infer an automaton consistent with a fixed set of positive and negative example traces. GGMS also produces a finite-state machine, but its data is not a static dataset: executions are generated online via MCTS-guided self-play and iteratively refined using counterexamples from model checking, and the target is satisfaction of distributed safety properties under all crash and message-loss patterns modeled by the verifier.

At a higher level, GGMS is also related to classical program synthesis, particularly counterexample-guided inductive synthesis (CEGIS) as introduced in program sketching~\citep{SolarLezama2008Sketching}. In CEGIS, a synthesizer proposes candidate programs from a constrained search space, and a verifier either accepts the candidate or returns a counterexample input; the counterexample is added to the training set and the loop repeats. GGMS follows a similar high-level architecture: a search procedure proposes candidate protocols and a model checker either accepts them or returns offending executions. However, typical CEGIS systems operate on general-purpose sequential programs (often using SMT-based search) and reason about input/output behavior, whereas GGMS searches over a structured space of distributed protocol state machines under asynchronous semantics and crash/message-loss faults, guided by reinforcement learning and MCTS rather than purely symbolic search. Moreover, GGMS explicitly explores the global state space of the protocol (up to a bound) to guarantee correctness for all executions within that model.

\paragraph{Large-scale reasoning and coding agents.}
Recent large-scale systems also combine powerful search or reinforcement learning with formal environments. AlphaProof is an AlphaZero-inspired agent for formal mathematics that operates inside the Lean theorem prover~\citep{AlphaProof}. Each problem instance is a formal theorem; the agent observes proof states, proposes tactics, and uses MCTS guided by a transformer ``proof network'' to search for proofs, with the Lean kernel checking correctness. The goal is to \emph{find proofs} of given statements within a fixed formal system, and the outcome is a proof term.

AlphaEvolve is an evolutionary coding framework for scientific and algorithmic discovery~\citep{AlphaEvolve}, in which large language models propose edits to candidate programs that are then executed and scored by problem-specific evaluation functions. The framework has been applied to discover new algorithms and improve implementations in several domains. In AlphaEvolve, candidate solutions are arbitrary programs, and their correctness or utility is judged by external evaluation metrics defined by the user.

GGMS is complementary to both: rather than proving theorems in a general-purpose proof assistant or evolving arbitrary code under user-defined metrics, GGMS operates in a fixed asynchronous message-passing model with crash and message-loss faults, searches over a constrained symbolic space of distributed protocols, and uses exhaustive model checking of all executions within this model to certify correctness (for a fixed number of processes).

\section{Experience with LLMs}
\label{sec:GPT}
We tested GPT 5.2 Pro and Gemini 3.0 Pro to see whether they can accomplish a similar goal.
Concretely, we used a prompt to describe the requirements and asked them to generate a protocol;
we manually read the generated protocol provided a counterexample if any; we repeated
this until they get a correct protocol. 

For the consensus protocol, both find a correct version in one shot, by correctly
pointing to the FloodSet algorithm. 

For the atomic commit protocol, our experience is different, probably because we change
the problem definition to some extent: Existing atomic commit protocols like two-phase
commit (2PC) target asynchronous networking environment, but ours targets synchronous
networking environment, and we are not aware of an existing atomic commit protocol targeting
synchronous networking environment. Both GPT and Gemini sometimes can find a correct protocol
in one shot
and sometimes needs multiple iteration with manual verification and counterexamples.
For Gemini, see the one-shot example in \url{https://gemini.google.com/share/6ae02187148b} and
the multiple iteration example in \url{https://gemini.google.com/share/ef4d4300c62f}.
We ran GPT in API mode, stored its output as text files, and submitted them in Supplementary Material
(gpt-atomic-commit1.txt is the one-shot example; gpt-atomic-commit2 is the multiple iteration example;
test.py is our program to call GPT APIs).

Considering that synchronous consensus has known solutions and synchronous atomic commit is still a rather simple protocol, we believe that, though
promising, LLMs still need additional help of verification and exploration to accomplish such tasks. This makes our work complementary to LLMs.
Prior work like AlphaProof and AlphaEvolve~\citep{AlphaProof,AlphaEvolve}
has made the same observation.


\end{appendix}

\end{document}